\documentclass{article}
\pdfoutput=1

\usepackage{graphicx} 
\usepackage[figtopcap]{subfigure}

\usepackage{array} 

\usepackage{natbib}

\usepackage{amsthm}

\usepackage{algorithm,algorithmic}



\usepackage[accepted]{icml2013}

\icmltitlerunning{Spectral Experts}

\usepackage{amsmath,amssymb}
\usepackage{soul}


\renewcommand{\comment}[1]{%
  \text{\phantom{(#1)}} \tag{#1}
}


\DeclareMathOperator {\trace} {tr}
\DeclareMathOperator{\E} {\mathbb{E}}

\DeclareMathOperator{\diag} {diag}

\DeclareMathOperator{\mult} {Multinomial}

\DeclareMathOperator{\cvec} {cvec}
\DeclareMathOperator{\vvec} {vec}

\newtheorem{condition}{Condition}

\renewcommand{\Re} {\mathbb{R}}

\newcommand{\half} {\frac{1}{2}}

\newcommand\eqdef{\ensuremath{\stackrel{\rm def}{=}}} 
\newcommand\refeqn[1]{(\ref{eqn:#1})}
\newcommand\sD{\ensuremath{\mathcal{D}}}

\newcommand\sE{\ensuremath{\mathcal{E}}}
\newcommand\refsec[1]{Section~\ref{sec:#1}}

\newcommand\refthm[1]{Theorem~\ref{thm:#1}}

\newcommand\sigmamin{\sigma_\text{\rm min}}

\newcommand\op{{\text{\rm op}}}

\newcommand\reflem[1]{Lemma~\ref{lem:#1}}
\newcommand{\Lop}{{\textrm{op}}}

\newcommand{\sectionref}[1] {\hyperref[#1]{Section \ref{#1}}}
\newcommand{\appendixref}[1] {\hyperref[#1]{Appendix \ref{#1}}}
\newcommand{\algorithmref}[1] {\hyperref[#1]{Algorithm \ref{#1}}}
\newcommand{\equationref}[1] {\hyperref[#1]{Equation \eqref{#1}}}
\newcommand{\figureref}[1] {\hyperref[#1]{Figure \ref{#1}}}
\newcommand{\tableref}[1] {\hyperref[#1]{Table \ref{#1}}}

\newtheorem{theorem}{Theorem}
\newtheorem{lemma}{Lemma}

\newcommand{\tp}[1] {^{\otimes #1}}
\newcommand{\innerp}[2] {\langle #1, #2 \rangle}
\DeclareMathOperator{\opX} {\mathfrak{X}}

\newcommand{\pinv}[1] {#1^{\dagger}}
\newcommand{\Ap} {\hat{A}}

\newcommand{\Wp} {\hat{W}}
\newcommand{\Winv} {W^{\dagger}}
\newcommand{\Whinv} {{\hat W}^{\dagger}}

\newcommand{\aerr}[1] {\varepsilon_{#1}}

\newcommand{\serr}[1] {\alpha_{#1}}



\usepackage{etoolbox}
\newtoggle{withappendix}
\toggletrue{withappendix}

\begin{document} 

\twocolumn[
\icmltitle{Spectral Experts for Estimating Mixtures of Linear Regressions}

\icmlauthor{Arun Tejasvi Chaganty}{chaganty@cs.stanford.edu}
\icmlauthor{Percy Liang}{pliang@cs.stanford.edu}
\icmladdress{Stanford University, Stanford, CA 94305 USA}

\icmlkeywords{spectral algorithms, mixture of experts, latent-variable
models, machine learning, ICML}

\vskip 0.3in
]

\begin{abstract}

Discriminative latent-variable models are typically learned using
EM or gradient-based optimization, which suffer from local optima.
In this paper, we develop a new computationally efficient
and provably consistent estimator for a mixture of linear regressions,
a simple instance of a discriminative latent-variable model.
Our approach relies on a low-rank linear regression to recover
a symmetric tensor, which can be factorized into the parameters
using a tensor power method.
We prove rates of convergence for our estimator
and provide an empirical evaluation illustrating
its strengths relative to local optimization (EM).

\textbf{Last Modified: \today}
\end{abstract} 

\section{Introduction}
\label{sec:intro}

Discriminative latent-variable models,
which combine the high accuracy of discriminative models
with the compact expressiveness of latent-variable models,
have been widely applied to many tasks, including
object recognition \cite{quattoni04crf},
human action recognition \cite{wang09crf},
syntactic parsing \cite{petrov08discriminative},
and machine translation \cite{liang06discrimative}.
However, parameter estimation in these models is difficult;
past approaches rely on local optimization (EM, 
gradient descent) and are vulnerable to local optima.

Our broad goal is to develop efficient provably consistent estimators for
discriminative latent-variable models.
In this paper, we provide a first step in this 
direction by proposing a new algorithm for a simple model,
\emph{a mixture of linear regressions} \cite{VieleTong2002}. 

Recently, method of moments estimators have been developed for
\emph{generative} latent-variable models, including
mixture models, HMMs \cite{anandkumar12moments},
Latent Dirichlet Allocation \cite{anandkumar12lda},
and parsing models \cite{hsu12identifiability}.
The basic idea of these methods is to express
the unknown model parameters as a tensor factorization
of the third-order moments of the model distribution, a quantity
which can be estimated from data.
The moments have a special symmetric structure
which permits the factorization to be computed efficiently using the robust
tensor power method \cite{AnandkumarGeHsu2012}.

In a mixture of linear regressions, using third-order moments does not
directly reveal the tensor structure of the problem, so we cannot
simply apply the above tensor factorization techniques.  Our approach
is to employ low-rank linear regression
\cite{NegahbanWainwright2009,Tomioka2011} to predict the second and third powers of
the response.  The solution to these regression problems provide the appropriate symmetric tensors,
on which we can then apply the tensor power method to retrieve the final parameters.

The result is a simple and efficient two-stage algorithm,
which we call Spectral Experts.
We prove that our algorithm yields consistent parameter estimates under certain
identifiability conditions.  We also conduct an empirical evaluation
of our technique to understand its statistical properties (\sectionref{sec:evaluation}).
While Spectral Experts generally does not outperform EM, presumably due to its
weaker statistical efficiency, it serves as an effective initialization for EM,
significantly outperforming EM with random initialization.






\subsection{Notation}


Let $[n] = \{ 1, \dots, n \}$ denote the first $n$ positive integers.
We use $O(f(n))$ to denote a function $g(n)$ such that $\lim_{n \to\infty} g(n)/f(n) < \infty$.

We use $x\tp{p}$ to represent the $p$-th order tensor formed by taking
the tensor product of $x \in \Re^d$; i.e. $x\tp{p}_{i_1 \ldots i_p}
= x_{i_1} \cdots x_{i_p}$. We will use $\innerp{\cdot}{\cdot}$ to denote
the generalized dot product between two $p$-th order tensors:
$\innerp{X}{Y} = \sum_{i_1, \ldots i_p} X_{i_1, \ldots i_p} Y_{i_1,
\ldots i_p}$.  A tensor $X$ is symmetric if for all $i,j \in [d]^p$
which are permutations of each other, $X_{i_1 \cdots i_p}$ = $X_{j_1
\cdots j_p}$ (all tensors in this paper will be symmetric).  For
a $p$-th order tensor $X \in (\Re^d)\tp{p}$, the mode-$i$ unfolding of
$X$ is a matrix $X_{(i)} \in \Re^{d \times d^{p-1}}$, whose $j$-th row contains all the elements of $X$ whose
$i$-th index is equal to $j$. 

For a vector $X$,
let $\|X\|_\op$ denote the 2-norm.
For a matrix $X$,
let $\|X\|_*$ denote the nuclear (trace) norm (sum of singular values),
$\|X\|_F$ denote the Frobenius norm (square root of sum of squares of singular values),
$\|X\|_{\max}$ denote the max norm (elementwise maximum),
$\|X\|_\op$ denote the operator norm (largest singular value), and
$\sigma_k(X)$ be the $k$-th largest singular value of $X$.
For a $p$-th order tensor $X$,
let $\|X\|_* = \frac{1}{p} \sum_{i=1}^p \|X_{(i)}\|_*$ denote
the average nuclear norm over all $p$ unfoldings,
and let $\|X\|_\op = \frac{1}{p} \sum_{i=1}^p \|X_{(i)}\|_\op$
denote the average operator norm over all $p$ unfoldings.

Let $\vvec(X)$ be the vectorization of a $p$-th order tensor. For
example, if $X \in (\Re^{2})\tp{3}$, $\vvec(X) = (X_{111}, X_{112},
\cdots, X_{222})$.
For a tensor $X \in (\Re^d)\tp{p}$, let $\cvec(X) \in
\Re^{N(d,p)}, N(d,p) = \binom{d + p - 1}{p}$ be the collapsed vectorization of
$X$. For example, if $X \in \Re^{d \times d}$, $\cvec(X)
= (X_{ii} : i \in [d]; \frac{X_{ij} + X_{ji}}{\sqrt{2}} : i,j \in [d], i<j)$.
In general, each component of $\cvec(X)$ is indexed by a vector of
counts $(c_1, \dots, c_d)$ with total sum $\sum_i c_i = p$.  The value
of that component is $\frac{1}{\sqrt{|K(c)|}} \sum_{k \in K(c)} X_{k_1 \cdots k_p}$, where $K(c)
= \{ k \in [d]^p : \forall i \in [d], c_i = |\{ j \in [p] : k_j = i \}|
\}$ are the set of index vectors $k$ whose count profile is $c$.
We note that for a symmetric tensor $X$ and any tensor $Y$,
$\innerp{X}{Y} = \innerp{\cvec(X)}{\cvec(Y)}$; this property is not true
in general though.
Later, we'll see that vectorization allow us to perform regression on tensors,
and collapsing simplifies our identifiability condition.

\section{Model}
\label{sec:model}

\newcommand{\xn}[1]{x^{(#1)}}
\newcommand{\xni}{\xn{i}}
\newcommand{\yn}[1]{y^{(#1)}}
\newcommand{\yni}{\yn{i}}

The mixture of linear regressions model \citep{VieleTong2002} defines
a conditional distribution over a response $y \in \Re$
given covariates $x \in \Re^d$.
Let $k$ be the number of mixture components.
The generation of $y$ given $x$ involves three steps:
(i) draw a mixture component $h \in [k]$ according to mixture proportions
$\pi = (\pi_1, \dots, \pi_k)$;
(ii) draw observation noise $\epsilon$ from a known zero-mean noise distribution $\sE$,
and (iii) set $y$ deterministically based on $h$ and $\epsilon$.
More compactly: 
\begin{eqnarray}
  h &\sim& \mult(\pi), \\
  \epsilon &\sim& \sE, \\
  y &=& \beta_{h}^T x + \epsilon.
\end{eqnarray}
The parameters of the model are $\theta = (\pi, B)$,
where $\pi \in \Re^d$ are the mixture proportions and
$B = [\beta_1 \mid \dots \mid \beta_k] \in \Re^{d \times k}$
are the regression coefficients.
Note that the choice of mixture component $h$ and the observation noise $\epsilon$ are independent.
The learning problem is stated as follows:
given $n$ i.i.d.\ samples $(\xn{1}, \yn{1}), \dots, (\xn{n}, \yn{n})$
drawn from the model with some unknown parameters $\theta^*$,
return an estimate of the parameters $\hat\theta$.

The mixture of linear regressions model has been applied
in the statistics literature for modelling music perception, where $x$ is the
actual tone and $y$ is the tone perceived by a musician \cite{VieleTong2002}.
The model is an instance of the hierarchical mixture of experts
\cite{jacobs91experts}, in which the mixture proportions are allowed to depend
on $x$, known as a gating function.
This dependence allow the experts to be localized in input space,
providing more flexibility, but we do not consider this dependence in our model.

The estimation problem for a mixture of linear regressions is difficult because
the mixture components $h$ are unobserved,
resulting in a non-convex log marginal likelihood.
The parameters are typically learned using
expectation maximization (EM) or Gibbs sampling \cite{VieleTong2002},
which suffers from local optima.
In the next section, we present a new algorithm
that sidesteps the local optima problem entirely.

\section{Spectral Experts algorithm}
\label{sec:algo}

In this section, we describe our Spectral Experts algorithm
for estimating model parameters $\theta = (\pi, B)$.
The algorithm consists of two steps:
(i) low-rank regression to estimate certain symmetric tensors;
and (ii) tensor factorization to recover the parameters.
The two steps can be performed efficiently using
convex optimization and tensor power method, respectively.

To warm up, let us consider linear regression
on the response $y$ given $x$.
From the model definition, we have $y = \beta_h^\top x + \epsilon$.
The challenge is that the regression coefficients $\beta_h$ depend on the random $h$.
The first key step is to average over this randomness by defining
average regression coefficients
$M_1 \eqdef \sum_{h=1}^k \pi_h \beta_h$.
Now we can express $y$ as a linear function of $x$ with non-random coefficients $M_1$
plus a noise term $\eta_1(x)$:
\begin{align}
  y &= \innerp{M_1}{x} +
  \underbrace{(\innerp{\beta_h - M_1}{x} + \epsilon)}_{\eqdef \eta_1(x)}. \label{eqn:y1}
\end{align}
The noise $\eta_1(x)$ is the sum of two terms:
(i) the \emph{mixing noise} $\innerp{M_1 - \beta_h}{x}$
due to the random choice of the mixture component $h$,
and (ii) the \emph{observation noise} $\epsilon \sim \sE$.
Although the noise depends on $x$,
it still has zero mean conditioned on $x$.
We will later show that we can
perform linear regression on the data $\{\xni,
\yni\}_{i=1}^{n}$ to produce a consistent estimate of $M_1$.
But clearly, knowing $M_1$ is insufficient
for identifying all the parameters $\theta$,
as
$M_1$ only contains $d$ degrees of freedom whereas $\theta$ contains $O(kd)$.

Intuitively, performing regression on $y$ given $x$ provides only first-order
information.  The second key insight is that we can perform regression
on higher-order powers to obtain more information about the parameters.
Specifically, for an integer $p \ge 1$, let us define the average
$p$-th order tensor power of the parameters as follows:
\begin{align}
M_p &\eqdef \sum_{h=1}^k \pi_h \beta_h\tp{p}. \label{eqn:Mp} 
\end{align}
Now consider performing regression on $y^2$ given $x\tp{2}$.
Expanding $y^2 = (\innerp{\beta_h}{x} + \epsilon)^2$,
using the fact that $\innerp{\beta_h}{x}^p = \innerp{\beta_h\tp{p}}{x\tp{p}}$,
we have:
\begin{align}
  y^2 &= \innerp{M_2}{x\tp{2}} + \E[\epsilon^2] + \eta_2(x), \label{eqn:y2} \\
\eta_2(x) &= \innerp{\beta_h\tp{2} - M_2}{x\tp{2}} + 2 \epsilon \innerp{\beta_h}{x} + (\epsilon^2 - \E[\epsilon^2]). \nonumber
\end{align}
Again, we have expressed $y^2$ has a linear function of $x\tp{2}$
with regression coefficients $M_2$, plus a known bias $\E[\epsilon^2]$ and noise.\footnote{If $\E[\epsilon^2]$ were not known,
we could treat it as another coefficient
to be estimated.  The coefficients $M_2$ and $\E[\epsilon^2]$ can be estimated jointly
provided that $x$ does not already contain a bias ($x_j$ must be non-constant for every $j \in [d]$).}
Importantly, the noise has mean zero; 
in fact each of the three terms has zero mean
by definition of $M_2$ and independence of $\epsilon$ and $h$.

Performing regression yields a consistent estimate of $M_2$,
but still does not identify all the parameters $\theta$.
In particular, $B$ is only identified up to rotation:
if $B = [\beta_1 \mid \cdots \mid \beta_k]$ satisfies
$B \diag(\pi) B^\top = M_2$ and $\pi$ is uniform, then $(B Q) \diag(\pi) (Q^\top B^\top) = M_2$
for any orthogonal matrix $Q$.

Let us now look to the third moment for additional information.
We can write $y^3$ as a linear function of $x\tp{3}$ with coefficients $M_3$,
a known bias $3 \E[\epsilon^2] \innerp{\hat M_1}{x} + \E[\epsilon^3]$ and some noise $\eta_3(x)$:
\begin{align}
  y^3 &= \innerp{M_3}{x\tp{3}} + 3\E[\epsilon^2] \innerp{\hat M_1}{x} + \E[\epsilon^3] + \eta_3(x), \nonumber \\
\eta_3(x) &= \innerp{\beta_h\tp{3} - M_3}{x\tp{3}}
+ 3 \epsilon \innerp{\beta_h\tp{2}}{x\tp{2}} \label{eqn:y3} \\
&\quad + 3(\epsilon^2 \innerp{\beta_h}{x} - \E[\epsilon^2] \innerp{\hat M_1}{x})
+ (\epsilon^3 - \E[\epsilon^3]). \nonumber
\end{align}
The only wrinkle here is that $\eta_3(x)$ does not quite have zero mean.
It would if $\hat M_1$ were replaced with $M_1$, but $M_1$ is not available to us.
Nonetheless, as $\hat M_1$ concentrates around $M_1$, the noise bias will go to zero.
Performing this regression yields an estimate of $M_3$.
We will see shortly that knowledge of $M_2$ and $M_3$ are sufficient to recover
all the parameters.

\begin{algorithm}[t]
  \caption{Spectral Experts}
  \label{algo:spectral-experts}
  \begin{algorithmic}[1]
    \INPUT Datasets $\mathcal{D}_p = \{ (\xn{1}, \yn{1}), \cdots, (\xn{n}, \yn{n}) \}$ for $p = 1, 2, 3$;
    regularization strengths $\lambda_n^{(2)}$, $\lambda_n^{(3)}$;
    observation noise moments $\E[\epsilon^2], \E[\epsilon^3]$.
    \OUTPUT Parameters $\hat\theta = (\hat \pi, [\hat \beta_1 \mid \cdots \mid \hat \beta_k])$.
    \STATE Estimate compound parameters $M_2, M_3$ using \textbf{low-rank regression}:
    \begin{align}
      &\hat M_1 = \arg\min_{M_1} \label{eqn:estimateM1} \\
      &\quad\frac{1}{2n}\sum_{(x,y) \in \sD_1} (\innerp{M_1}{x} - y)^2, \nonumber \\
      &\hat M_2 = \arg\min_{M_2} \quad \lambda_n^{(2)} \|M_2\|_* + \label{eqn:estimateM2} \\
      &\quad\frac{1}{2n}\sum_{(x,y) \in \sD_2} (\innerp{M_2}{x\tp{2}} + \E[\epsilon^2] - y^2)^2, \nonumber \\
      &\hat M_3 = \arg\min_{M_3} \quad \lambda_n^{(3)} \|M_3\|_* + \label{eqn:estimateM3} \\
      &{\frac{1}{2n} \hspace{-0.5em}\sum_{(x,y) \in \sD_3} \hspace{-1em}(\innerp{M_3}{x\tp{3}} + 3 \E[\epsilon^2]\innerp{\hat M_1}{x} + \E[\epsilon^3] - y^3)^2}. \nonumber
    \end{align}
    \STATE Estimate parameters $\theta = (\pi, B)$ using \textbf{tensor factorization}:
    \begin{enumerate}
      \item [(a)] Compute whitening matrix $\hat W \in \Re^{d \times k}$ (such that $\hat W^\top
      \hat M_2 \hat W = I$) using SVD.
      \item [(b)] Compute eigenvalues $\{\hat a_h\}_{h=1}^k$
      and eigenvectors $\{\hat v_h\}_{h=1}^k$
      of the whitened tensor $\hat M_3(\hat W, \hat W, \hat W) \in \Re^{k \times k \times k}$
      by using the robust tensor power method.
    \item [(c)] Return parameter estimates $\hat\pi_h = \hat a_h^{-2}$
    and $\hat\beta_h = (\hat W^{\top})^\dagger (\hat a_h \hat v_h)$.
    \end{enumerate}
  \end{algorithmic}
\end{algorithm}

Now we are ready to state our full algorithm, which we call Spectral Experts
(\algorithmref{algo:spectral-experts}).
First, we perform three regressions to recover the \emph{compound parameters}
$M_1$ \refeqn{y1},
$M_2$ \refeqn{y2}, and
$M_3$ \refeqn{y3}.
Since $M_2$ and $M_3$ both only have rank $k$,
we can use nuclear norm regularization
\cite{Tomioka2011,NegahbanWainwright2009}
to exploit this low-rank structure and improve our compound parameter estimates.
In the algorithm, the regularization strengths $\lambda_n^{(2)}$ and $\lambda_n^{(3)}$
are set to $\frac{c}{\sqrt{n}}$ for some constant $c$.

Having estimated the compound parameters $M_1$, $M_2$ and $M_3$, it
remains to recover the original parameters $\theta$.
\citet{AnandkumarGeHsu2012} showed that for $M_2$ and $M_3$ of
the forms in \refeqn{Mp}, it is possible to efficiently accomplish this.
Specifically, we first compute a whitening matrix $W$ based on the SVD of $M_2$
and use that to construct a tensor $T = M_3(W, W, W)$ whose factors are orthogonal.
We can use the robust tensor power method to compute all the
eigenvalues and eigenvectors of $T$, from which it is easy to recover
the parameters $\pi$ and $\{\beta_h\}$.

\paragraph{Related work}

In recent years, there has a been a surge of interest in ``spectral'' methods
for learning latent-variable models.  One line of work has
focused on observable operator models \cite{hsu09spectral,song10kernel,parikh12spectral,cohen12pcfg,balle11transducer,balle12automata}
in which a re-parametrization of the true parameters are recovered,
which suffices for prediction and density estimation.
Another line of work is based on the method of moments and uses eigendecomposition of a certain tensor
to recover the parameters \cite{anandkumar12moments,anandkumar12lda,hsu12identifiability,hsu13spherical}.
Our work extends this second line of work to models that
require regression to obtain the desired tensor.

In spirit, Spectral Experts bears some resemblance to the unmixing
algorithm for estimation of restricted PCFGs
\cite{hsu12identifiability}.
In that work, the observations (moments) provided a linear combination over
the compound parameters.  ``Unmixing'' involves solving for the compound
parameters by inverting a mixing matrix.
In this work,
each data point (appropriately transformed) provides a different noisy projection of
the compound parameters.

%
Other work has focused on learning discriminative models,
notably \citet{balle11transducer} for finite state transducers (functions from strings to strings),
and \citet{balle12automata} for weighted finite state automata (functions from strings to real numbers).
Similar to Spectral Experts,
\citet{balle12automata} used a two-step approach,
where convex optimization is first used to estimate moments (the Hankel matrix in their case),
after which these moments are subjected to spectral decomposition.  
However, these methods are developed in the observable operator framework, whereas we consider parameter estimation.

The idea of performing low-rank regression on $y^2$ has been explored
in the context of signal recovery from magnitude measurements
\cite{candes11phaselift,ohlsson12phase}.
There, the actual observed response was $y^2$,
whereas in our case, we deliberately construct powers $y,y^2,y^3$
to identify the underlying parameters.



\section{Theoretical results}
\label{sec:theory}

In this section, we provide theoretical guarantees for the Spectral Experts algorithm.
Our main result shows that the parameter estimates $\hat\theta$ converge to $\theta$
at a $\frac{1}{\sqrt{n}}$ rate that depends polynomially on the bounds on the
parameters, covariates, and noise, as well the $k$-th smallest singular values
of the compound parameters and various covariance matrices.

\begin{theorem}[Convergence of Spectral Experts]
\label{thm:convergence}
Assume each dataset $\sD_p$ (for $p = 1, 2, 3$) consists of $n$ i.i.d.\ points independently drawn from a mixture
of linear regressions model with parameter $\theta^*$.\footnote{Having three independent copies simplifies the analysis.}
Further, assume 
$\|x\|_2 \le R$, 
$\|\beta_h^*\|_2 \le L$ for all $h \in [k]$,
$|\epsilon| \le S$
and $B$ is rank $k$.
Let $\Sigma_p \eqdef \E[\cvec(x\tp{p})\tp{2}]$, 
and assume $\Sigma_p \succ 0$ for each $p \in \{1,2,3\}$.
Let $\epsilon < \half$.
Suppose the number of samples is
$n = \max(n_1,n_2)$
where 
\begin{align*}
n_1 &= \Omega \left(\frac{R^{12} \log(1/\delta)}{\min_{p \in [3]} \sigmamin(\Sigma_p)^2} \right) \\
n_2 &= \Omega \left(\epsilon^{-2}~ \frac{k^2 \pi^2_{\max} \|M_2\|_\op^{1/2} \|M_3\|_\op^2 { L^{6} S^{6} R^{12}}}{\sigma_k(M_2)^{5} {\sigmamin(\Sigma_1)^2}} \log(1/\delta) \right).
\end{align*}
If each regularization strength $\lambda_n^{(p)}$ is set to 
$$\Theta\left( \frac{L^p S^p R^{2p}}{\sigmamin(\Sigma_1)^2} \sqrt{\frac{\log(1/\delta)}{n}} \right),$$
for $p \in 2, 3$,
then the parameter estimates $\hat\theta = (\hat\pi, \hat B)$ returned by
\algorithmref{algo:spectral-experts} (with the columns appropriately permuted)
satisfies 
  \begin{align*}
  \|\hat \pi - \pi \|_{\infty} \le \epsilon \quad\quad 
  \|\hat \beta_h - \beta_h\|_2 \le \epsilon
  \end{align*}
  for all $h \in [k]$.
\end{theorem}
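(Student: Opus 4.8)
The plan is to decompose the analysis along the two stages of \algorithmref{algo:spectral-experts}: first bound the estimation error $\|\hat M_p - M_p\|_F$ for each compound parameter produced by the low-rank regressions, and then propagate these errors through the whitening and tensor power method to bound $\|\hat\pi - \pi\|_\infty$ and $\|\hat\beta_h - \beta_h\|_2$.

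For the first stage I would invoke the standard analysis of nuclear-norm-regularized regression \cite{NegahbanWainwright2009,Tomioka2011}, which needs two ingredients. First, a restricted strong convexity condition on the empirical design: since each regression has design $\cvec(x\tp{p})$ with population second moment $\Sigma_p \succ 0$, I would show that the empirical covariance $\frac{1}{n}\sum \cvec(x\tp{p})\tp{2}$ concentrates around $\Sigma_p$ in operator norm, keeping its smallest eigenvalue $\Omega(\sigmamin(\Sigma_p))$. This is exactly what $n_1$ buys: the $R^{12}$ factor arises because for $p=3$, $\cvec(x\tp{3})$ has norm up to $R^3$, and the relevant concentration controls its fourth moment. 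Second, I would set $\lambda_n^{(p)}$ to dominate the operator norm of the loss gradient at the truth, namely $\|\frac{1}{n}\sum \cvec(x\tp{p})\,\eta_p(x)\|_\op$; since each $\eta_p$ has zero conditional mean and is bounded by a polynomial in $L,S,R$ under the assumptions $\|x\|_2 \le R$, $\|\beta_h^*\|_2 \le L$, $|\epsilon| \le S$, matrix Bernstein yields the scale $L^p S^p R^{2p}\sqrt{\log(1/\delta)/n}$, matching the prescribed choice. Combining these with the rank-$k$ structure of $M_p$ gives $\|\hat M_p - M_p\|_F = O(\sqrt{k}\,\lambda_n^{(p)}/\sigmamin(\Sigma_p))$, the desired $1/\sqrt{n}$ rate. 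The one wrinkle is that $\eta_3$ only has zero mean when $\hat M_1$ is replaced by $M_1$; I would treat the discrepancy $3\E[\epsilon^2]\innerp{\hat M_1 - M_1}{x}$ as an extra perturbation of the gradient, controlled by the already-established $\|\hat M_1 - M_1\|_2 = O(1/\sqrt{n})$ bound from the $M_1$ least-squares estimate, and using the independence of $\sD_1$ and $\sD_3$ so that $\hat M_1$ is independent of the $\sD_3$ design. This contributes a term of the same order and does not change the rate.

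For the second stage I would use the perturbation guarantees for the robust tensor power method of \citet{AnandkumarGeHsu2012}. The whitening matrix $\hat W$ comes from the SVD of $\hat M_2$, so its deviation from $W$ is controlled by $\|\hat M_2 - M_2\|_\op/\sigma_k(M_2)$, with $\sigma_k(M_2)$ governing the amplification and $\|W\|_\op \sim \sigma_k(M_2)^{-1/2}$. Substituting $\hat W$ into $\hat M_3(\hat W,\hat W,\hat W)$, the perturbation of the whitened tensor combines $\|\hat M_3 - M_3\|_\op$ and $\|\hat M_2 - M_2\|_\op$, each amplified by powers of $\|W\|_\op$ and $\|M_3\|_\op$; tracking these factors produces the $\sigma_k(M_2)^5$ denominator and the $\|M_2\|_\op^{1/2}\|M_3\|_\op^2$ numerator in $n_2$. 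The tensor power method then returns $(\hat a_h, \hat v_h)$ with error proportional to this whitened-tensor perturbation, and the explicit formulas $\hat\pi_h = \hat a_h^{-2}$, $\hat\beta_h = \pinv{(\hat W^\top)}(\hat a_h \hat v_h)$ translate these into the claimed bounds, with the $k^2\pi_{\max}^2$ and $\epsilon^{-2}$ factors arising from demanding accuracy $\epsilon$ uniformly over all $k$ components.

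I expect the main obstacle to be this second stage: the whitening matrix depends on $\hat M_2$ and enters the recovered tensor in all three modes, so perturbations are amplified nonlinearly, and carefully propagating every dependence on $\sigma_k(M_2)$, $\|M_2\|_\op$, and $\|M_3\|_\op$ through the composition of whitening, tensor power iteration, and un-whitening—while keeping the final bound in the clean polynomial form stated—is the delicate part. A secondary subtlety is the cross-stage coupling introduced by the $\hat M_1$ bias term, which prevents the three regressions from being analyzed in complete isolation.
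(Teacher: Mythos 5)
Your proposal follows essentially the same route as the paper's proof: a first stage invoking the Tomioka/Negahban--Wainwright low-rank regression framework (restricted strong convexity via concentration of the empirical covariance of $\cvec(x\tp{p})$, plus a choice of $\lambda_n^{(p)}$ dominating the dual norm of the gradient/adjoint $\frac{1}{n}\|\opX_p^*(\eta_p)\|_\op$, with the $\eta_3$ bias handled through the $\|\hat M_1 - M_1\|_2$ bound and independence of $\sD_1$ and $\sD_3$), followed by a second stage propagating $\|\hat M_p - M_p\|_\op$ through whitening, the robust tensor power method of \citet{AnandkumarGeHsu2012}, and un-whitening, exactly as in the paper's Lemmas 2--4 and synthesis. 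The only cosmetic difference is your appeal to matrix Bernstein where the paper uses a McDiarmid-type bound on $\|\hat\E_p[\eta_p(x)x\tp{p}]\|_F$; both yield the same $L^pS^pR^{2p}\sqrt{\log(1/\delta)/n}$ scale.
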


While the dependence on some of the norms ($L^6,S^6,R^{12}$) looks formidable,
it is in some sense unavoidable, since we
need to perform regression on third-order moments.
Classically, the number of samples required is squared norm of the covariance matrix,
which itself is bounded by the squared norm of the data, $R^3$. This
third-order dependence also shows up in the regularization strengths;
the cubic terms bound each of $\epsilon^3$,
$\beta_h^3$ and $\|(x\tp{3})\tp{2}\|_F$ with high probability. 

The proof of the theorem has two parts.
First, we bound the error in the compound parameters estimates $\hat M_2,\hat M_3$
using results from \citet{Tomioka2011}.
Then we use results from \citet{AnandkumarGeHsu2012} to convert this error
into a bound on the actual parameter estimates $\hat\theta = (\hat\pi, \hat B)$
derived from the robust tensor power method.
But first, let us study a more basic property: identifiability.


\subsection{Identifiability from moments}

In ordinary linear regression, the regression coefficients $\beta \in
\Re^d$ are identifiable if and only if the data has full rank:
$\E[x\tp{2}] \succ 0$, and furthermore, identifying $\beta$ requires
only moments $\E[xy]$ and $\E[x\tp{2}]$ (by observing the optimality
conditions for \refeqn{y1}).  However, in mixture of linear regressions,
these two moments only allow us to recover $M_1$.  \refthm{convergence}
shows that if we have the higher order analogues, $\E[x\tp{p}y\tp{p}]$
and $\E[x\tp{2p}]$ for $p \in \{1,2,3\}$, we can then identify the
parameters $\theta = (\pi, B)$,
provided the following \emph{identifiability condition} holds: $\E[\cvec(x\tp{p})\tp{2}] \succ
0$ for $p \in \{1,2,3\}$.

This identifiability condition warrants a little care,
as we can run into trouble when components of $x$ are dependent on each other
in a particular algebraic way.
For example, suppose $x = (1, t, t^2)$, the common polynomial
basis expansion, so that all the coordinates are deterministically
related.  While $\E[x\tp{2}] \succ 0$ might be satisfied (sufficient for ordinary linear regression),
$\E[\cvec(x\tp{2})\tp{2}]$ is singular for
any data distribution.
To see this, note that $\cvec(x\tp{2}) = [1 \cdot 1, t\cdot t, 2(1
\cdot t^2), 2(t \cdot t^2), (t^2 \cdot t^2)]$ contains components $t
\cdot t$ and $2(1 \cdot t^2)$, which are linearly dependent.  Therefore,
Spectral Experts would not be able to identify the parameters of
a mixture of linear regressions for this data distribution.

We can show that some amount of unidentifiability is intrinsic to
estimation from low-order moments, not just an artefact of our
estimation procedure.  Suppose $x = (t, \dots, t^d)$.  Even if we
observed all moments $\E[x\tp{p}y\tp{p}]$ and $\E[x\tp{2p}]$ for $p \in
[r]$ for some $r$, all the resulting coordinates would be monomials of $t$ up to only degree
$2dr$, and thus the moments live in a $2dr$-dimensional subspace.  On
the other hand, the parameters $\theta$ live in a subspace of at least
dimension $dk$.  Therefore, at least $r \ge k/2$ moments are required
for identifiability of any algorithm for this monomial example.

\subsection{Analysis of low-rank regression}
\label{sec:regression}

In this section, we will bound the error of
the compound parameter estimates $\|\Delta_2\|_F^2$ and $\|\Delta_3\|_F^2$,
where $\Delta_2 \eqdef \hat M_2 - M_2$
and $\Delta_3 \eqdef \hat M_3 - M_3$.
Our analysis is based on the low-rank regression framework of
\citet{Tomioka2011} for tensors, which builds on
\citet{NegahbanWainwright2009} for matrices.
The main calculation involved is controlling the noise $\eta_p(x)$,
which involves various polynomial combinations of the mixing noise and observation noise.

Let us first establish some notation that unifies the three regressions (\refeqn{estimateM1}, \refeqn{estimateM2}, and \refeqn{estimateM3}).
Define the observation operator $\opX_p(M_p) : \Re^{d\tp{p}} \to \Re^{n}$
mapping compound parameters $M_p$:
\begin{align}
\opX_p(M_p; \sD)_i &\eqdef \innerp{M_p}{x\tp{p}_i}, & (x_i, y_i) \in \sD.
\end{align}

Let $\kappa(\opX_p)$ be the restricted strong convexity constant,
and let $\opX^*_p(\eta_p; \sD) = \sum_{(x,y) \in \sD} \eta_p(x) x\tp{p}$
be the adjoint.

\begin{lemma}[\citet{Tomioka2011}, Theorem 1]
\label{lem:lowRank}
Suppose there exists a restricted strong convexity constant $\kappa(\opX_p)$ such that
$$\frac{1}{n} \| \opX_p( \Delta )\|_2^2 \ge \kappa(\opX_p) \|\Delta\|^2_F \quad \text{and} \quad
\lambda^{(p)}_n \ge \frac{2 \|\opX_p^*(\eta_p)\|_\op}{n}.$$
Then the error of $\hat M_p$ is bounded as follows:
$$\| \hat M_p - M_p \|_F \le \frac{32 \lambda^{(p)}_n \sqrt{k}}{\kappa(\opX_p)}.$$
\end{lemma}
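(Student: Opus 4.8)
The plan is to follow the standard $M$-estimation argument for decomposable regularizers (Negahban--Wainwright), adapted to the tensor nuclear norm as in Tomioka. Throughout I write $\Delta = \hat M_p - M_p$ and abbreviate $\lambda = \lambda_n^{(p)}$ and $\kappa = \kappa(\opX_p)$. After subtracting the known bias terms, the observations take the form $y = \opX_p(M_p) + \eta_p$, so the objective being minimized is $\frac{1}{2n}\|\opX_p(M) - \opX_p(M_p) - \eta_p\|_2^2 + \lambda\|M\|_*$.

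First I would record the basic inequality from optimality of $\hat M_p$: since $\hat M_p$ minimizes the objective and $M_p$ is feasible, substituting both, expanding the quadratic, and cancelling the common $\frac{1}{2n}\|\eta_p\|_2^2$ gives
$$\frac{1}{2n}\|\opX_p(\Delta)\|_2^2 \le \frac{1}{n}\innerp{\opX_p(\Delta)}{\eta_p} + \lambda(\|M_p\|_* - \|\hat M_p\|_*).$$
The cross term is controlled by passing to the adjoint and using duality between the nuclear and operator norms:
$$\frac{1}{n}\innerp{\opX_p(\Delta)}{\eta_p} = \frac{1}{n}\innerp{\Delta}{\opX_p^*(\eta_p)} \le \frac{1}{n}\|\opX_p^*(\eta_p)\|_\op\|\Delta\|_* \le \frac{\lambda}{2}\|\Delta\|_*,$$
where the last step is exactly the role of the hypothesis $\lambda \ge 2\|\opX_p^*(\eta_p)\|_\op/n$: it forces $\lambda$ to dominate the noise.

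Next I would exploit decomposability of the nuclear norm relative to the rank-$k$ structure of $M_p$. Splitting $\Delta = \Delta_M + \Delta_{M^\perp}$, where $\Delta_M$ is the projection onto the (at most rank-$2k$) subspace spanned by the row and column spaces of $M_p$, one has $\|M_p\|_* - \|\hat M_p\|_* \le \|\Delta_M\|_* - \|\Delta_{M^\perp}\|_*$ and $\|\Delta\|_* = \|\Delta_M\|_* + \|\Delta_{M^\perp}\|_*$. Plugging these into the basic inequality collapses the right-hand side to $\frac{3\lambda}{2}\|\Delta_M\|_* - \frac{\lambda}{2}\|\Delta_{M^\perp}\|_*$; since the left-hand side is nonnegative this simultaneously yields the cone condition $\|\Delta_{M^\perp}\|_* \le 3\|\Delta_M\|_*$ and the clean bound $\frac{1}{2n}\|\opX_p(\Delta)\|_2^2 \le \frac{3\lambda}{2}\|\Delta_M\|_*$.

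Finally I would invoke restricted strong convexity. The cone condition places $\Delta$ in the region where the RSC hypothesis applies, so $\kappa\|\Delta\|_F^2 \le \frac{1}{n}\|\opX_p(\Delta)\|_2^2 \le 3\lambda\|\Delta_M\|_*$. Because $\Delta_M$ has rank at most $2k$, $\|\Delta_M\|_* \le \sqrt{2k}\,\|\Delta_M\|_F \le \sqrt{2k}\,\|\Delta\|_F$, and dividing by $\|\Delta\|_F$ gives $\|\Delta\|_F \le 3\sqrt{2k}\,\lambda/\kappa$, which is of the claimed form (the constant $32$ being a loose bound that also absorbs the tensor overhead). The main obstacle is precisely this tensor adaptation: here $\|\cdot\|_*$ is the \emph{average} of the nuclear norms of the $p$ unfoldings, so the decomposability step must be run per unfolding --- using that each unfolding of a rank-$k$ tensor is a matrix of rank at most $k$ --- and the per-mode cone conditions recombined. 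This per-mode bookkeeping is exactly what inflates the constant and is the content of Tomioka's Theorem~1, which I would cite for the careful accounting rather than reproduce.
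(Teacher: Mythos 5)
Your proposal is correct, but note that the paper itself offers no proof of this statement at all: it is imported verbatim as Theorem~1 of \citet{Tomioka2011} and used as a black box (the appendix restates the same citation, with slightly different constants). What you have written is essentially a reconstruction of the cited theorem's own proof --- the Negahban--Wainwright decomposability argument adapted to the overlapped tensor nuclear norm --- so the difference from the paper is not a different mathematical route but that you re-derive what the paper delegates to a reference. Your reconstruction is sound and in fact yields the sharper constant $3\sqrt{2k}\,\lambda_n^{(p)}/\kappa(\opX_p)$, which implies the stated bound with $32\sqrt{k}$. Two points deserve slightly more care than you give them. First, $\|\Delta\|_* = \|\Delta_M\|_* + \|\Delta_{M^\perp}\|_*$ should be an inequality ($\le$, by the triangle inequality): exact decomposability holds for pairs in $\mathcal{M} \times \bar{\mathcal{M}}^\perp$, whereas $\Delta_M$ lives in the enlarged subspace $\bar{\mathcal{M}}$; only the $\le$ direction is needed, so nothing breaks. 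Second, the H\"older step $\innerp{\Delta}{\opX_p^*(\eta_p)} \le \|\opX_p^*(\eta_p)\|_\op \|\Delta\|_*$ is not immediate here, because the paper's tensor norms are \emph{averages} over the $p$ unfoldings and the average operator norm is not the exact dual of the average nuclear norm. It does hold, by a small argument you should make explicit: H\"older applied in each mode gives $\innerp{\Delta}{\opX_p^*(\eta_p)} \le \min_i \|(\opX_p^*(\eta_p))_{(i)}\|_\op \|\Delta_{(i)}\|_*$, and the minimum over modes is bounded by the geometric mean, which factors and is in turn bounded by the product of the two arithmetic means. This, together with running the decomposability and rank-$2k$ bounds per unfolding and re-averaging, is the ``per-mode bookkeeping'' you defer to \citet{Tomioka2011}; it goes through without inflating your constant, so the factor $32$ in the statement is simply slack inherited from the cited theorem.
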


Going forward, we need to lower bound the restricted strong convexity
constant $\kappa(\opX_p)$ and upper bound the operator norm of the adjoint operator
$\|\opX_p^*(\eta_p)\|_\op$. The proofs of the following lemmas follow
from standard concentration inequalities and are detailed in 
\iftoggle{withappendix}{
\appendixref{sec:proofs:regression}.
}{
the supplementary material.
}



\begin{lemma}[lower bound on restricted strong convexity constant]
\label{lem:lowRankLower}
If $$n = \Omega \left(\max_{p\in [3]} \frac{R^{4p} (p!)^2 \log(1/\delta)}{\sigmamin(\Sigma_p)^2} \right),$$
then with probability at least $1-\delta$:
$$\kappa(\opX_p) \ge \frac{\sigmamin(\Sigma_p)}{2},$$
for each $p \in [3]$.
\end{lemma}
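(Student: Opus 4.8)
The plan is to reduce the restricted strong convexity condition to a lower bound on the smallest eigenvalue of an empirical second-moment matrix, and then control that eigenvalue by matrix concentration. Since the data-fit term $\innerp{M_p}{x\tp{p}}$ depends only on the symmetric part of $M_p$ and symmetrizing an optimum of \refeqn{estimateM2}--\refeqn{estimateM3} leaves the objective unchanged (the tensor nuclear norm is permutation-invariant and convex), we may take $\hat M_p$, hence the error direction $\Delta \eqdef \hat M_p - M_p$, to be symmetric. On the symmetric subspace the collapsing identity $\innerp{\Delta}{x\tp{p}} = \innerp{\cvec(\Delta)}{\cvec(x\tp{p})}$ and its isometric consequence $\|\Delta\|_F = \|\cvec(\Delta)\|_2$ both apply. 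Writing $\hat\Sigma_p \eqdef \frac{1}{n}\sum_{i=1}^n \cvec(x_i\tp{p})\tp{2}$ for the empirical analogue of $\Sigma_p$, this turns the operator into a quadratic form,
$$\frac{1}{n}\|\opX_p(\Delta)\|_2^2 = \cvec(\Delta)^\top \hat\Sigma_p\, \cvec(\Delta) \ge \sigmamin(\hat\Sigma_p)\,\|\Delta\|_F^2,$$
so it suffices to show $\sigmamin(\hat\Sigma_p) \ge \half\sigmamin(\Sigma_p)$ with high probability; this certifies $\kappa(\opX_p) \ge \half\sigmamin(\Sigma_p)$ for \emph{all} $\Delta$, which is even stronger than the restricted cone required by \reflem{lowRank}.

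Next I would invoke matrix concentration. The matrix $\hat\Sigma_p$ is an average of $n$ i.i.d.\ rank-one positive semidefinite matrices $\cvec(x_i\tp{p})\tp{2}$ whose expectation is $\Sigma_p \succ 0$. The per-term range is controlled using the isometry again: $\|\cvec(x_i\tp{p})\|_2^2 = \|x_i\tp{p}\|_F^2 = \|x_i\|_2^{2p} \le R^{2p}$. Applying a matrix Bernstein bound to the centered sum $\hat\Sigma_p - \Sigma_p$, with deviation $t = \half\sigmamin(\Sigma_p)$, yields $\sigmamin(\hat\Sigma_p) \ge \half\sigmamin(\Sigma_p)$ once $n$ clears a threshold set by the range and the matrix variance, up to a logarithmic factor in the ambient dimension $N(d,p)$.

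The sample-complexity accounting is the step that produces the stated rate. Dominating the centered fourth-moment matrix by $\E[(\cvec(x\tp{p})\tp{2})^2] = \E[\|\cvec(x\tp{p})\|_2^2\,\cvec(x\tp{p})\tp{2}] \preceq R^{2p}\Sigma_p$ gives a variance proxy of at most $\frac{1}{n}R^{2p}\sigmamax(\Sigma_p) \le \frac{1}{n}R^{4p}$, after the crude bound $\sigmamax(\Sigma_p) \le \E\|\cvec(x\tp{p})\|_2^2 \le R^{2p}$. With $t = \half\sigmamin(\Sigma_p)$, the variance-dominated branch of Bernstein then requires $n = \Omega\!\left(R^{4p}\,\sigmamin(\Sigma_p)^{-2}\log(1/\delta)\right)$, exactly the claimed threshold; the combinatorial factor $(p!)^2$ is carried through the norm conversions and is a harmless constant for $p \le 3$. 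Finally I would union-bound over $p \in \{1,2,3\}$, which accounts for the $\max_{p \in [3]}$ in the hypothesis and gives the simultaneous guarantee.

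I expect the reduction in the first paragraph and the invocation of the concentration inequality to be essentially bookkeeping; the genuinely delicate point is the variance computation, namely dominating the centered fourth-moment matrix by $R^{2p}\Sigma_p$, since it is precisely this step that converts the boundedness $\|x\|_2 \le R$ into the $R^{4p}/\sigmamin(\Sigma_p)^2$ dependence rather than a weaker or stronger rate.
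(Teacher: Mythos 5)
Your reduction to the smallest eigenvalue of $\hat\Sigma_p$ --- symmetrizing $\Delta$, using $\innerp{\Delta}{x\tp{p}} = \innerp{\cvec(\Delta)}{\cvec(x\tp{p})}$ and $\|\cvec(\Delta)\|_2 = \|\Delta\|_F$, then Weyl's theorem --- is exactly the paper's argument, and your observation that this certifies strong convexity over all of $\Re^{d\tp{p}}$ (not just a restricted cone) matches the paper as well. Where you diverge is the concentration step: the paper never invokes matrix Bernstein or any variance computation. It instead applies a dimension-free vector concentration lemma (McDiarmid's bounded differences plus Jensen, its Lemma~5) to the \emph{Frobenius} norm of $\hat\Sigma_p - \Sigma_p$, which dominates the operator norm; the per-sample bound on $\|\cvec(x\tp{p})\tp{2}\|_F$ then yields $\|\hat\Sigma_p - \Sigma_p\|_\op = O\bigl(p!\,R^{2p}\sqrt{\log(1/\delta)/n}\bigr)$ with no dependence on the ambient dimension. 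Your matrix-Bernstein route is sound, and the step you single out as delicate --- dominating $\E[(\cvec(x\tp{p})\tp{2})^2] \preceq R^{2p}\Sigma_p$ --- is correct, but it buys nothing here and costs a $\log N(d,p) = O(p \log d)$ factor in the deviation bound, which you acknowledge once and then drop when claiming the threshold is ``exactly'' the stated one. Under the paper's convention that $\Omega(\cdot)$ is asymptotic in $n$ only, that extra logarithm is absorbable as a constant, so your proof does establish the lemma; but the Frobenius-norm trick is the cleaner tool precisely because crude Frobenius control already suffices when the target deviation is $\sigmamin(\Sigma_p)/2$ in operator norm, and it is what keeps the stated sample complexity free of ambient-dimension logarithms (the $(p!)^2$ in the statement comes from the paper's slack bound $\|\Sigma_p\|_F \le p!\,R^{2p}$, not from any variance calculation).
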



\begin{lemma}[upper bound on adjoint operator]
\label{lem:lowRankUpper}
If $$n = \Omega \left(\max_{p\in [3]} \frac{L^{2p} S^{2p} R^{4p} \log(1/\delta)}{ \sigmamin(\Sigma_1)^2 \left(\lambda_n^{(p)}\right)^2} \right),$$
then with probability at least $1-\delta$:
$$\lambda_n^{(p)} \ge \frac1{n} \|\opX_p^*(\eta_p)\|_\op,$$
for each $p \in [3]$.
\end{lemma}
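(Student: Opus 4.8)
The plan is to view $\frac{1}{n}\opX_p^*(\eta_p) = \frac{1}{n}\sum_{i=1}^n \eta_p(x_i)\, x_i\tp{p}$ as an average of $n$ independent random tensors and to control its operator norm by a concentration argument applied unfolding-by-unfolding. Since $\|\cdot\|_\op$ on a tensor is the average of the operator norms of its $p$ unfoldings, it suffices to bound $\frac{1}{n}\|\sum_i \eta_p(x_i)(x_i\tp{p})_{(j)}\|_\op$ for each mode $j$ and then average. First I would record the two deterministic facts needed about a single summand: the unfolded rank-one tensor $(x\tp{p})_{(j)}$ has operator norm $\|x\|^p \le R^p$, and the scalar $\eta_p(x)$ is a polynomial of total degree $p$ in $\innerp{\beta_h}{x}$ and $\epsilon$, hence bounded by sums of terms $(LR)^j S^{p-j}$; absorbing constants I bound every summand by $\|\eta_p(x_i)x_i\tp{p}\|_\op \le B_p \eqdef c\, L^p S^p R^{2p}$ (a loose product bound that dominates each cross-term), and control its second moment by $B_p^2$.

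Next, I would verify the mean-zero structure that makes this a sum of centered matrices. For $p \in \{1,2\}$ each term of $\eta_p(x)$ has zero conditional mean given $x$ by construction — the mixing-noise term is centered by the definition of $M_p$ and the observation-noise terms have the $\E[\epsilon^j]$ subtracted off — so $\E[\eta_p(x)x\tp{p}] = 0$ and the summands are i.i.d.\ zero-mean. For $p=3$ there is the wrinkle that $\eta_3$ uses the plug-in $\hat M_1$ rather than $M_1$; I would split $\eta_3 = \tilde\eta_3 + 3\E[\epsilon^2]\innerp{M_1 - \hat M_1}{x}$, where $\tilde\eta_3$ (defined with the true $M_1$) is conditionally mean-zero and is handled exactly as the $p\le 2$ case, and the residual bias term is treated separately.

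With the centered sum in hand I would apply a matrix Bernstein inequality: a sum of $n$ independent zero-mean matrices with range $B_p$ and matrix variance $\lesssim n B_p^2$ concentrates so that, with probability $1-\delta$, $\frac{1}{n}\|\sum_i \eta_p(x_i)(x_i\tp{p})_{(j)}\|_\op \le c\,B_p\sqrt{\log(1/\delta)/n}$, once the subleading range term $B_p\log(1/\delta)/n$ is dominated (which holds in the stated sample regime). For the $p=3$ bias I would bound $\frac1n\|\sum_i \E[\epsilon^2]\innerp{M_1-\hat M_1}{x_i}x_i\tp{3}\|_\op \le c\,S^2 R^4\,\|\hat M_1 - M_1\|_2$ and invoke an ordinary-least-squares error bound for $\hat M_1$ obtained from the regression \refeqn{y1}, namely $\|\hat M_1 - M_1\|_2 \lesssim \frac{LSR}{\sigmamin(\Sigma_1)}\sqrt{\log(1/\delta)/n}$; this is the single place where $\sigmamin(\Sigma_1)$ enters, and stating it uniformly over $p$ in the hypothesis merely over-provisions samples for $p \in \{1,2\}$. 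Finally I would collect the bounds: requiring $\lambda_n^{(p)} \ge \frac1n\|\opX_p^*(\eta_p)\|_\op$ reduces, after solving the concentration inequality for $n$, to $n = \Omega\!\big(\frac{B_p^2 \log(1/\delta)}{\sigmamin(\Sigma_1)^2 (\lambda_n^{(p)})^2}\big) = \Omega\!\big(\frac{L^{2p}S^{2p}R^{4p}\log(1/\delta)}{\sigmamin(\Sigma_1)^2 (\lambda_n^{(p)})^2}\big)$, and a union bound over the (at most $3$) unfoldings and over $p \in [3]$ preserves the $1-\delta$ guarantee.

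I expect the main obstacle to be the probabilistic control of the noise–covariate products $\eta_p(x_i)x_i\tp{p}$: these are products of degree-$p$ polynomials in the bounded variables, so the crude product bound $B_p = c\,L^pS^pR^{2p}$ is loose, and obtaining a clean matrix-variance proxy that yields the advertised $1/\sqrt{n}$ rate requires careful bookkeeping across the several cross-terms of $\eta_p$. A secondary difficulty is the $p=3$ coupling to $\hat M_1$: the plug-in bias must be shown to be of the same $O(1/\sqrt{n})$ order, which is why a preliminary concentration bound for the ordinary regression estimate $\hat M_1$ has to be established and fed in — and this is precisely the fingerprint that $\sigmamin(\Sigma_1)$ leaves in the stated sample complexity.
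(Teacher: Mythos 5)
Your proposal is correct and is architecturally the same as the paper's proof: pointwise bounds on $\eta_p(x)$ from $\|x\|_2 \le R$, $\|\beta_h\|_2 \le L$, $|\epsilon| \le S$; conditionally mean-zero summands for $p \in \{1,2\}$; for $p=3$, the split into a centered part (defined with the true $M_1$) plus the plug-in bias $3\E[\epsilon^2]\innerp{M_1 - \hat M_1}{x}$, bounded by $3S^2R^4\|\hat M_1 - M_1\|_2$ together with a separate $O\bigl(\sigmamin(\Sigma_1)^{-1}\sqrt{\log(1/\delta)/n}\bigr)$ bound on $\|\hat M_1 - M_1\|_2$ (indeed the sole source of the $\sigmamin(\Sigma_1)$ factor); and a final union bound over $p$. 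The one genuine difference is the concentration tool. The paper never unfolds the tensor and does not use matrix Bernstein: it bounds $\|\hat\E_p[\eta_p(x)x\tp{p}]\|_\op$ by the Frobenius norm and applies the dimension-free bounded-differences (McDiarmid) argument of \reflem{conc-norms}, exploiting $\|\eta_p(x)\,x\tp{p}\|_F \le |\eta_p(x)|\,\|x\|_2^p \le B_p$; since every summand is rank one with bounded Frobenius norm, this loses nothing here. Your per-unfolding matrix Bernstein route is sound but buys a variance proxy you do not need while paying a dimensionality price: its tail reads $\log((d+d^{p-1})/\delta)$ rather than $\log(1/\delta)$, so as written it proves the lemma only with an extra $p\log d$ term in the sample complexity, which is strictly weaker than the dimension-free statement claimed (and quoted in Theorem~\ref{thm:convergence}); replacing Bernstein by the Frobenius-norm concentration closes this small gap. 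Finally, your use of a direct OLS error bound for $\hat M_1$, where the paper instead invokes \reflem{lowRank} for $p=1$ with $\kappa(\opX_1) \ge \sigmamin(\Sigma_1)/2$, is an equivalent substitution; in both cases one implicitly conditions on the restricted-strong-convexity event of \reflem{lowRankLower}, a dependence your write-up (like the paper's) defers to the synthesis step.
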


\subsection{Analysis of the tensor factorization} 
\label{sec:tensorError}

Having bounded the error of the compound parameter estimates $\hat M_2$ and $\hat M_3$,
we will now study how this error propagates through the tensor factorization step of 
\algorithmref{algo:spectral-experts},
which includes whitening, applying the robust tensor power method \cite{AnandkumarGeHsu2012},
and unwhitening.
\begin{lemma}
  \label{lem:tensorPower}
  Let $M_3 = \sum_{h=1}^{k} \pi_h \beta_h\tp{3}$.
  Let $\|\hat M_2 - M_2\|_\op$ and $\|\hat M_3 - M_3\|_\op$ both be less than
  \vspace{-0.5em}
  $$\frac{\sigma_k(M_2)^{5/2}}{k \pi_{\max} \|M_2\|_\op^{1/2} \| {M_3} \|_\op}~ \epsilon,$$
  for some $\epsilon < \half$. 
  Then, there exists a permutation of indices such that  the parameter
  estimates found in step 2 of \algorithmref{algo:spectral-experts}
  satisfy the following with probability at least $1 - \delta$:
  \begin{align*}
  \|\hat \pi - \pi \|_{\infty} &\le \epsilon \\
  \|\hat \beta_h - \beta_h\|_2 &\le \epsilon.
  \end{align*}
  for all $h \in [k]$.
\end{lemma}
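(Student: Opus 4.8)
The plan is to treat Step 2 of \algorithmref{algo:spectral-experts} as a deterministic perturbation problem and invoke the robust tensor power method guarantees of \citet{AnandkumarGeHsu2012}. First I would record the exact (noiseless) factorization. Since $M_2 = \sum_h \pi_h \beta_h\tp{2} = B\diag(\pi)B^\top$ has rank $k$, the whitening matrix $W$ (satisfying $W^\top M_2 W = I$) can be written as $W = U\Sigma^{-1/2}$ from the rank-$k$ SVD $M_2 = U\Sigma U^\top$. A direct computation shows that the whitened vectors $v_h \eqdef \sqrt{\pi_h}\, W^\top \beta_h$ are orthonormal, so the whitened tensor has the orthogonal decomposition $M_3(W,W,W) = \sum_h \lambda_h v_h\tp{3}$ with $\lambda_h = \pi_h^{-1/2}$ and $W^\top \beta_h = \lambda_h v_h$. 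Hence the eigenpairs $(\lambda_h, v_h)$ of the whitened tensor exactly encode the parameters via $\pi_h = \lambda_h^{-2}$ and $\beta_h = \pinv{(W^\top)}(\lambda_h v_h)$, which is precisely what Step 2(c) inverts.

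Next I would bound, in operator norm, the perturbation of the object fed to the power method, namely $\varepsilon_T \eqdef \|\hat M_3(\hat W, \hat W, \hat W) - M_3(W,W,W)\|_\op$. I would split this by the triangle inequality into (i) the error from replacing $M_3$ by $\hat M_3$ with the same whitener, bounded by $\|\Delta_3\|_\op\,\|\hat W\|_\op^3$, and (ii) the error from replacing $W$ by $\hat W$ with $M_3$ fixed, bounded by $O(\|M_3\|_\op\,\|W\|_\op^2\,\|\hat W - W\|_\op)$. Matrix perturbation theory (Weyl's inequality together with a Davis--Kahan-type bound, exactly as in \citet{AnandkumarGeHsu2012}) gives $\|W\|_\op = \sigma_k(M_2)^{-1/2}$, $\|\pinv{(W^\top)}\|_\op = \|M_2\|_\op^{1/2}$, and $\|\hat W - W\|_\op = O(\|\Delta_2\|_\op\,\sigma_k(M_2)^{-3/2})$. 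Combining these yields $\varepsilon_T = O\!\big(\|\Delta_3\|_\op\,\sigma_k(M_2)^{-3/2} + \|M_3\|_\op\,\|\Delta_2\|_\op\,\sigma_k(M_2)^{-5/2}\big)$, and substituting the hypothesized bounds on $\|\Delta_2\|_\op,\|\Delta_3\|_\op$ makes $\varepsilon_T$ small relative to $\lambda_{\min}/k = \pi_{\max}^{-1/2}/k$, which is the regime required by the power method.

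With $\varepsilon_T$ controlled, I would apply the robust tensor power method guarantee of \citet{AnandkumarGeHsu2012}: up to a permutation of the recovered components, with probability at least $1-\delta$ (over the random restarts) the estimates satisfy $|\hat a_h - \lambda_h| = O(\varepsilon_T)$ and $\|\hat v_h - v_h\|_2 = O(\varepsilon_T/\lambda_h) = O(\varepsilon_T\sqrt{\pi_h})$. The last step is to push these errors through the unwhitening map in Step 2(c). For the proportions, smoothness of $a \mapsto a^{-2}$ gives $|\hat\pi_h - \pi_h| = O(\pi_h^{3/2}|\hat a_h - \lambda_h|) = O(\varepsilon_T)$. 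For the regression vectors I would write $\hat\beta_h - \beta_h = \big(\pinv{(\hat W^\top)} - \pinv{(W^\top)}\big)(\lambda_h v_h) + \pinv{(\hat W^\top)}(\hat a_h\hat v_h - \lambda_h v_h)$, bounding the first term by $\|\pinv{(\hat W^\top)} - \pinv{(W^\top)}\|_\op\,\lambda_h$ (pseudoinverse perturbation, again $O(\|\Delta_2\|_\op)$ up to spectral factors) and the second by $\|M_2\|_\op^{1/2}\,O(\varepsilon_T)$, using $\lambda_h\|\hat v_h - v_h\|_2 = O(\varepsilon_T)$. Tracking the spectral factors shows both contributions are $O(\epsilon)$, which is exactly why the hypothesis carries the weight $\sigma_k(M_2)^{5/2}/(k\pi_{\max}\|M_2\|_\op^{1/2}\|M_3\|_\op)$.

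I expect the main obstacle to be the bookkeeping in the second paragraph: because $\hat W$ is applied three times to the already-perturbed $\hat M_3$ and then inverted once in the unwhitening step, the whitening error enters at several different powers of $\sigma_k(M_2)$, and one must verify that the dominant contribution is the $\sigma_k(M_2)^{-5/2}$ term so that the final bound matches the stated hypothesis without losing additional powers of $\sigma_k(M_2)$.
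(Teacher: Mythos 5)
Your proposal follows essentially the same route as the paper's proof: establish the exact orthogonal decomposition $M_3(W,W,W)=\sum_h \pi_h^{-1/2} v_h\tp{3}$, bound $\|\hat M_3(\hat W,\hat W,\hat W)-M_3(W,W,W)\|_\op$ by a triangle-inequality split into the $\hat M_3$-error term (scaled by $\|\hat W\|_\op^3$) and the whitener-perturbation term (scaled by $\|M_3\|_\op\|W\|_\op^2$), invoke the robust tensor power method guarantee of \citet{AnandkumarGeHsu2012}, and propagate the eigenvalue/eigenvector errors through the unwhitening map via pseudoinverse perturbation bounds---exactly the Step 1/2/3 structure of the paper's argument, with matching spectral factors ($\|\hat W - W\|_\op = O(\|\Delta_2\|_\op \sigma_k(M_2)^{-3/2})$, the $\sigma_k(M_2)^{-5/2}$ dominant term, and $\pi_h = \lambda_h^{-2}$ smoothness for the mixing weights). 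The only cosmetic difference is that the paper derives the whitening-matrix perturbation bounds via its own lemma (rotational invariance of $W = \hat W U D^{-1/2}U^\top$) rather than a Davis--Kahan-type argument, which does not change the substance.
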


The proof follows by applying standard matrix perturbation results for
the whitening and unwhitening operators and 
\iftoggle{withappendix}{
can be found in \appendixref{sec:proofs:tensors}.
}{
has again been deferred to the supplementary material.
}

\subsection{Synthesis}
Together, these lemmas allow us to control the compound parameter error
and the recovery error. We now apply them in the proof of
\refthm{convergence}:

\begin{proof}[Proof of Theorem 1 (sketch)]
By \reflem{lowRank}, \reflem{lowRankLower} and \reflem{lowRankUpper}, we
can control the Frobenius norm of the error in the moments, which
directly upper bounds the operator norm: If $n \ge \max\{n_1, n_2\}$,
then
\begin{align}
  \|\hat M_p - M_p\|_\op = O\left( \lambda_n^{(p)} \sqrt{k} \sigmamin(\Sigma_p)^{-1} \right).
\end{align}

We complete the proof by applying \reflem{tensorPower} with the above
bound on $\|\hat M_p - M_p\|_\op$.

\end{proof}

\section{Empirical evaluation}
\label{sec:evaluation}

In the previous section, we showed that Spectral Experts provides a consistent
estimator.  
In this section, we explore the empirical properties of our algorithm on simulated data.
Our main finding is that Spectral Experts alone attains higher parameter error than EM,
but this is not the complete story.
If we initialize EM with the estimates returned by Spectral Experts,
then we end up with much better estimates than EM from a random initialization.

\subsection{Experimental setup}

\paragraph{Algorithms}

We experimented with three algorithms.
The first algorithm (Spectral) is simply the Spectral Experts.
We set the regularization strengths $\lambda_n^{(2)} = \frac{1}{10^{5} \sqrt{n}}$
and $\lambda_n^{(3)} = \frac{1}{10^{3} \sqrt{n}}$;
the algorithm was not very sensitive to these choices.
We solved the low-rank regression to estimate $M_2$ and
$M_3$ using an off-the-shelf convex optimizer, CVX~\cite{cvx}.
The second algorithm (EM) is EM where the $\beta$'s are initialized from a standard normal
and $\pi$ was set to the uniform distribution plus some small perturbations.
We ran EM for 1000 iterations.
In the final algorithm (Spectral+EM),
we initialized EM with the output of Spectral Experts.

\paragraph{Data}

We generated synthetic data as follows:
First, we generated a vector $t$ sampled uniformly over the $b$-dimensional
unit hypercube $[-1,1]^b$.
Then, to get the actual covariates $x$, we applied a non-linear function of $t$
that conformed to the identifiability criteria discussed in
\sectionref{sec:algo}.
The true regression coefficients $\{\beta_h\}$ were drawn from a standard normal
and $\pi$ is set to the uniform distribution.
The observation noise $\epsilon$ is drawn from a normal with variance $\sigma^2$.
Results are presented below for $\sigma^2 = 0.1$, but we did not observe any
qualitatively different behavior for choices of $\sigma^2$ in the range
$[0.01, 0.4]$.  

As an example, one feature map we considered in the one-dimensional
setting $(b=1)$ was $x = (1, t, t^4, t^7)$. The data and the curves fit using
Spectral Experts, EM with random initialization and EM initialized with
the parameters recovered using Spectral Experts are shown in
\figureref{fig:curves}. We note that even on well-separated data such as
this, EM converged to the correct basin of attraction only 13\% of the time.

\begin{figure}[t]
  \centering
  \includegraphics[width=0.50\textwidth]{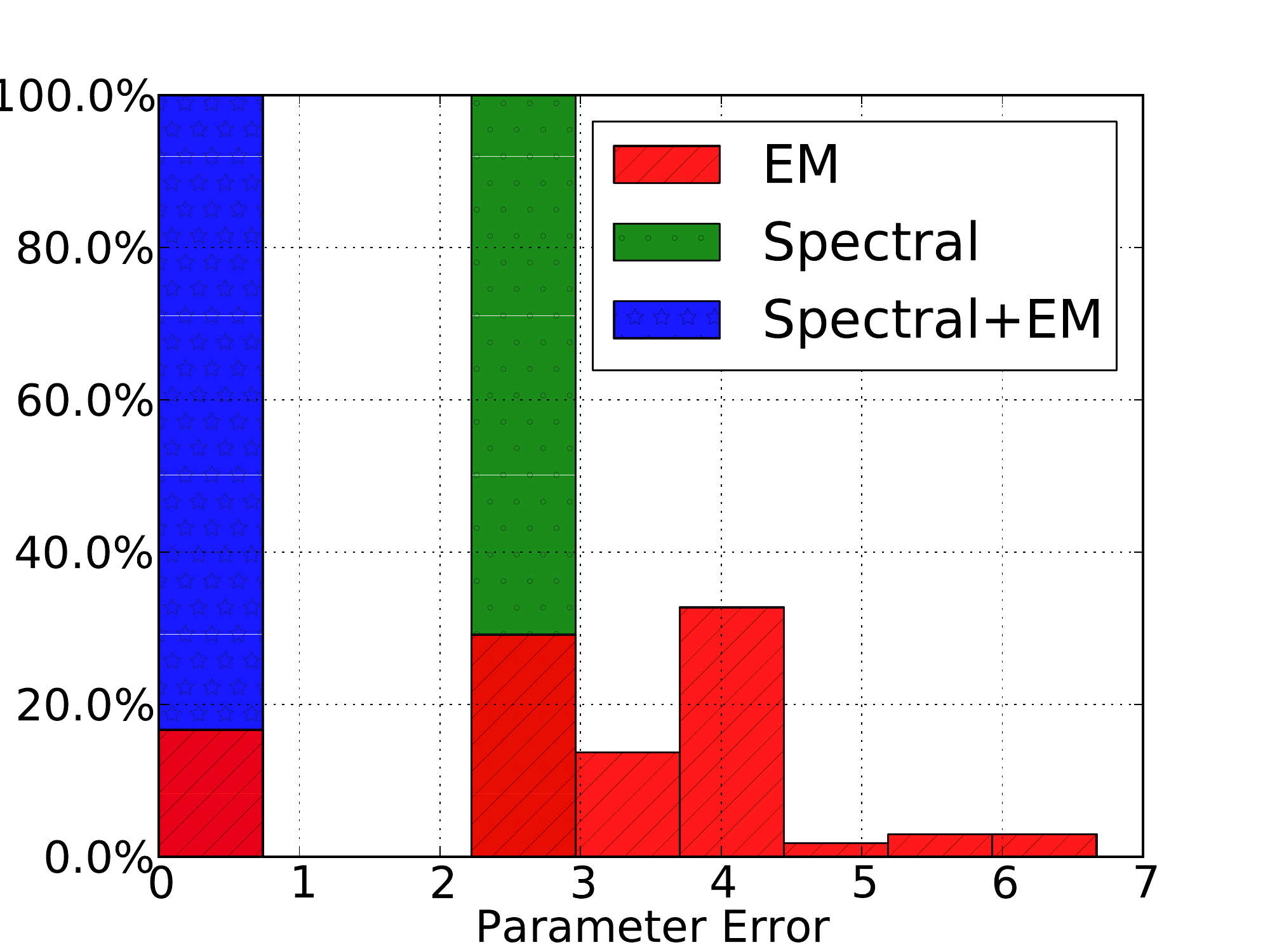}
  \caption{Histogram over recovery errors between the three algorithms when $b = 1, d = 4, k = 3, n = 500,000$.}
  \label{fig:hist}
\end{figure}

\begin{figure*}[p]
  \centering
  \subfigure[Spectral, Spectral+EM]{
    \includegraphics[width=0.50\textwidth]{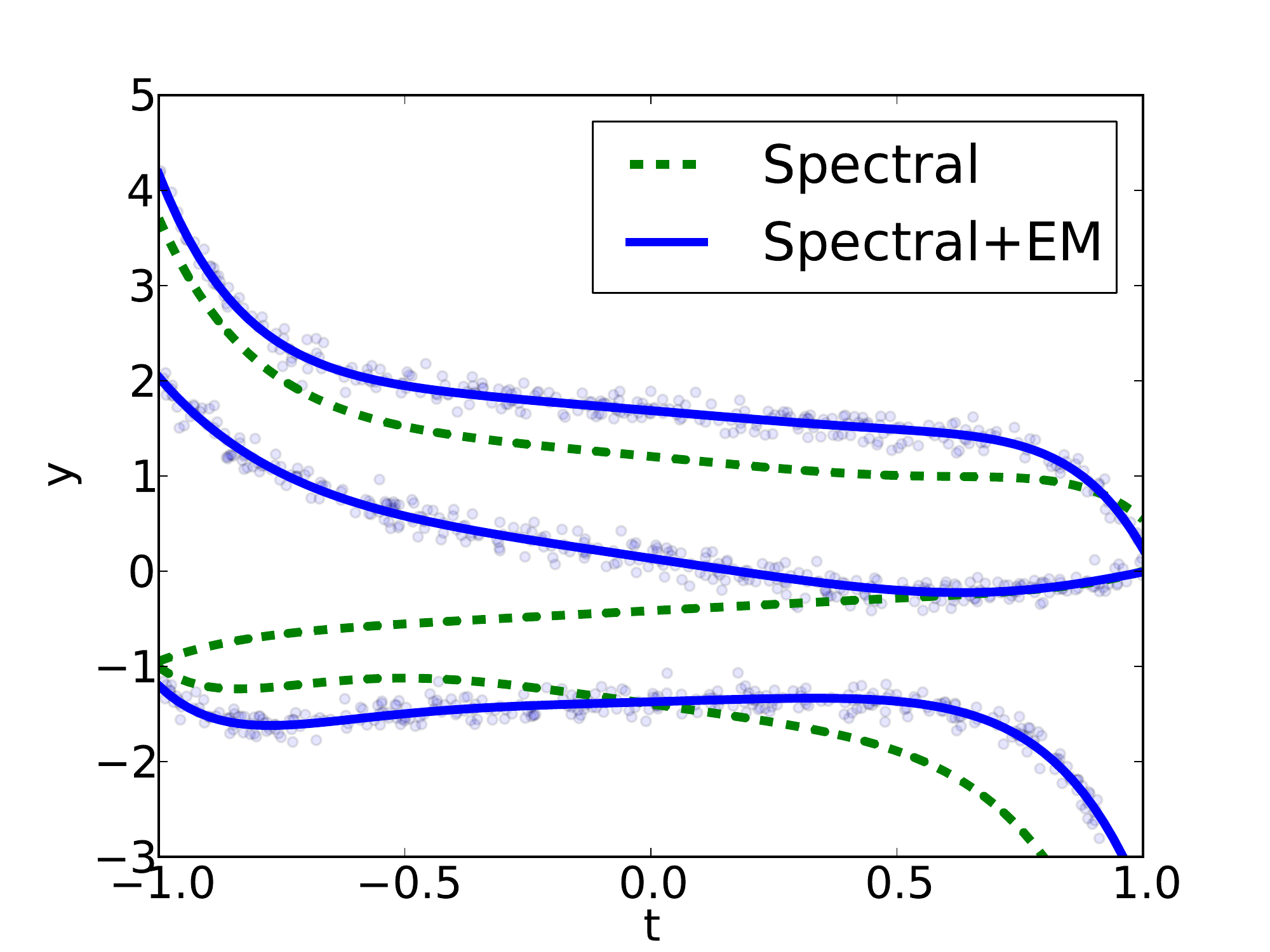}}
    \hspace{-2em}
  \subfigure[EM]{
    \includegraphics[width=0.50\textwidth]{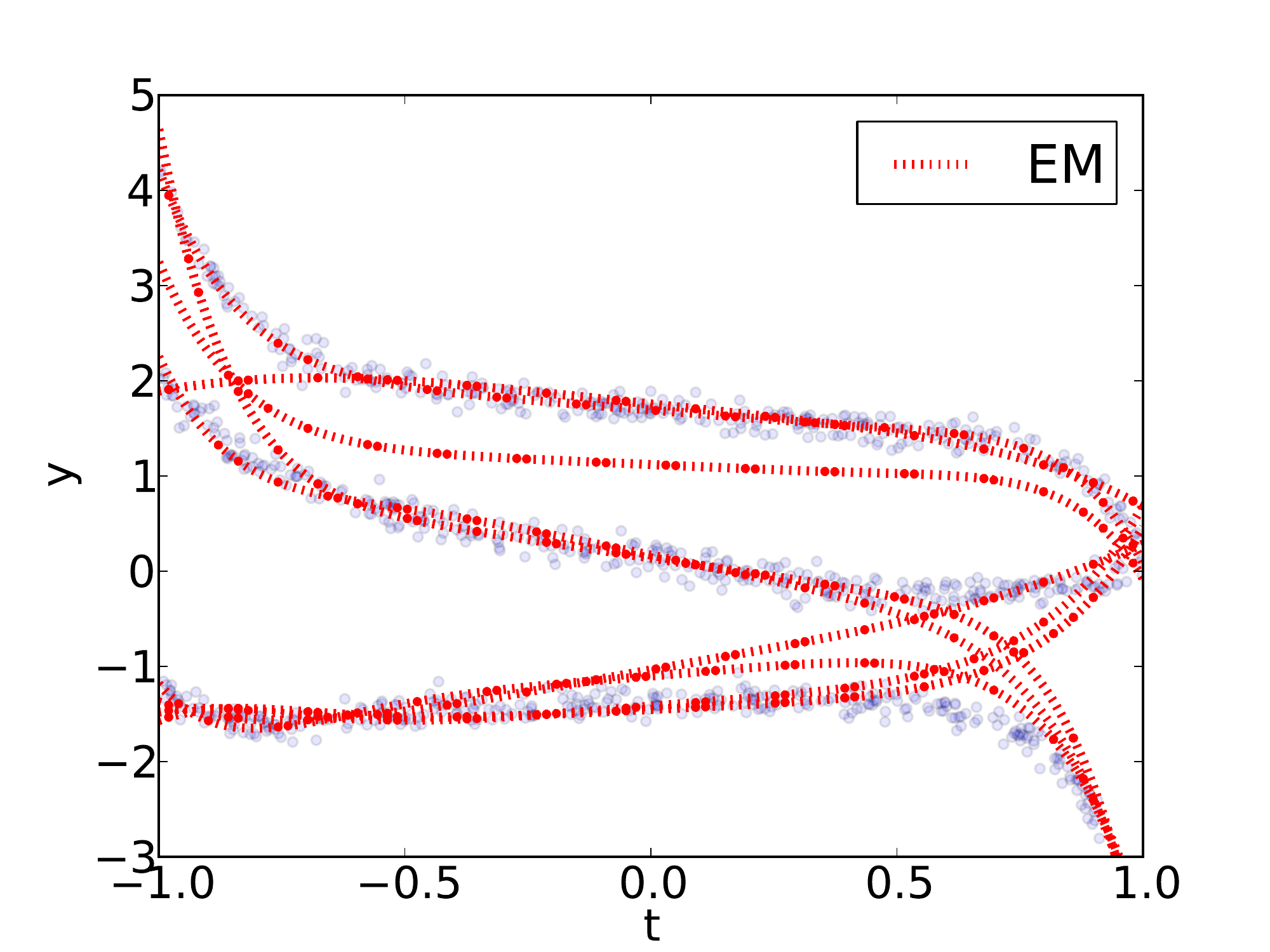}}
  \caption{Visualization of the parameters estimated by Spectral Experts versus EM.
  (a) The dashed lines denote the solution recovered by Spectral Experts. While
  not a perfect fit, it provides an good initialization for EM to further improve the solution (solid lines).
  (b) The dotted lines show different local optima found by EM.}
  \label{fig:curves}
\end{figure*}


\begin{table*}[tbhp]
\caption{Parameter error $\|\theta^* - \hat \theta\|_F$ ($n = 500,000$)
as the number of base variables $b$, number of features $d$ and the number of components $k$ increases.
While Spectral by itself does not produce good parameter estimates, Spectral+EM improves
over EM significantly.
}
\label{tbl:parameter-recovery}
\vskip 0.15in
\begin{center}
\begin{small}
\begin{sc}

  \begin{tabular}{ r r r c c c }
\hline
\abovespace\belowspace
Variables ($b$) & Features ($d$) & Components ($k$) & Spectral & EM & Spectral + EM \\
\hline
\abovespace
  1 & 4 & 2 & 2.45 $\pm$ 3.68 & 0.28 $\pm$ 0.82 & {\bf 0.17 $\pm$ 0.57} \\
2 & 5 & 2 & 1.38 $\pm$ 0.84 & {\bf 0.00 $\pm$ 0.00} & {\bf 0.00 $\pm$ 0.00} \\
  2 & 5 & 3 & 2.92 $\pm$ 1.71 & 0.43 $\pm$ 1.07 & {\bf 0.31 $\pm$ 1.02} \\
  2 & 6 & 2 & 2.33 $\pm$ 0.67 & 0.63 $\pm$ 1.29 & {\bf 0.01 $\pm$ 0.01} \\


\hline

\end{tabular}
\end{sc}
\end{small}
\end{center}
\vskip -0.1in
\end{table*}


\begin{figure*}[tbhp]
  \centering
  \subfigure[Well-specified data]{
    \includegraphics[width=0.50\textwidth]{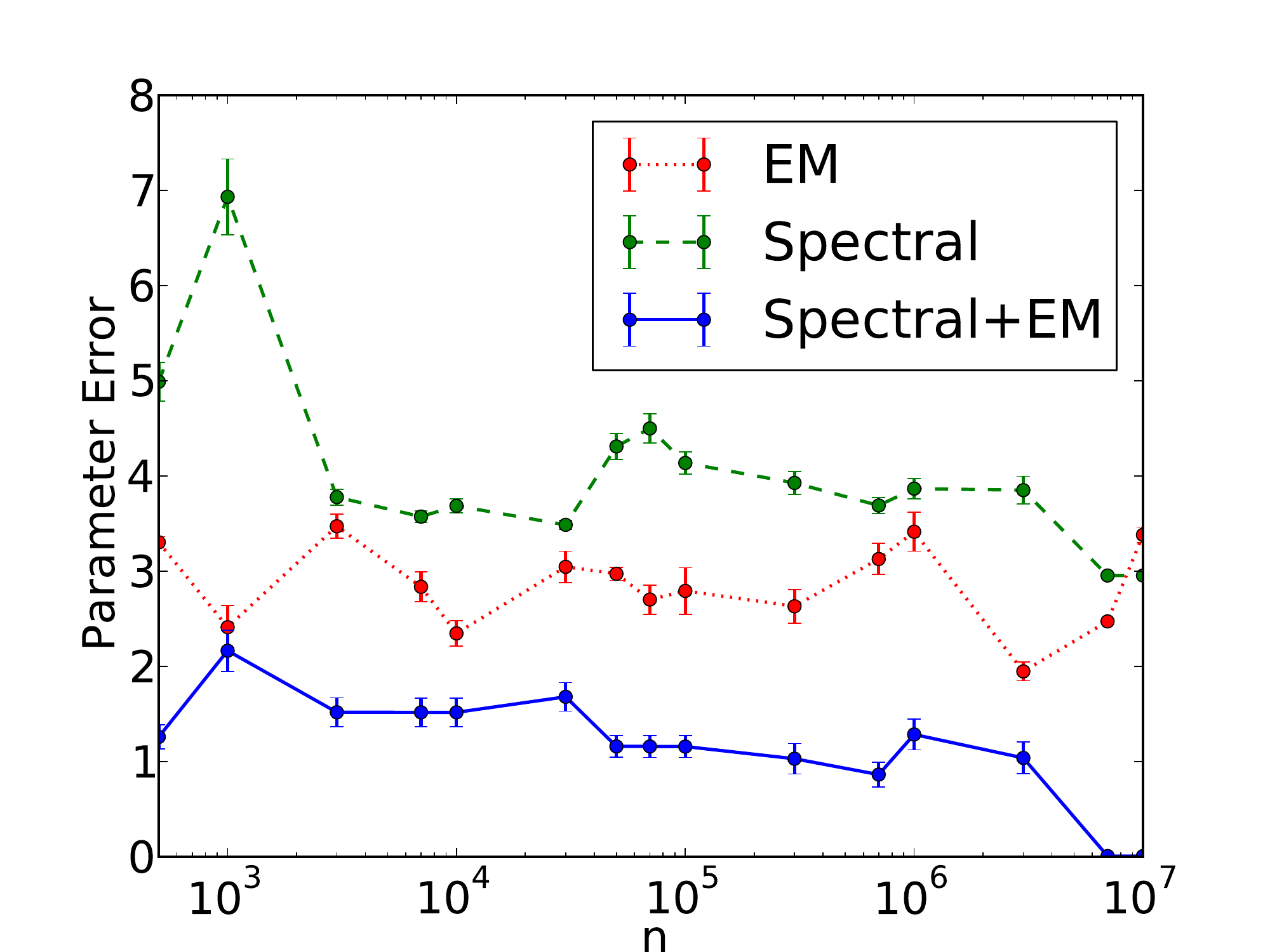}
  }
    \hspace{-2em}
  \subfigure[Misspecified data]{
    \includegraphics[width=0.50\textwidth]{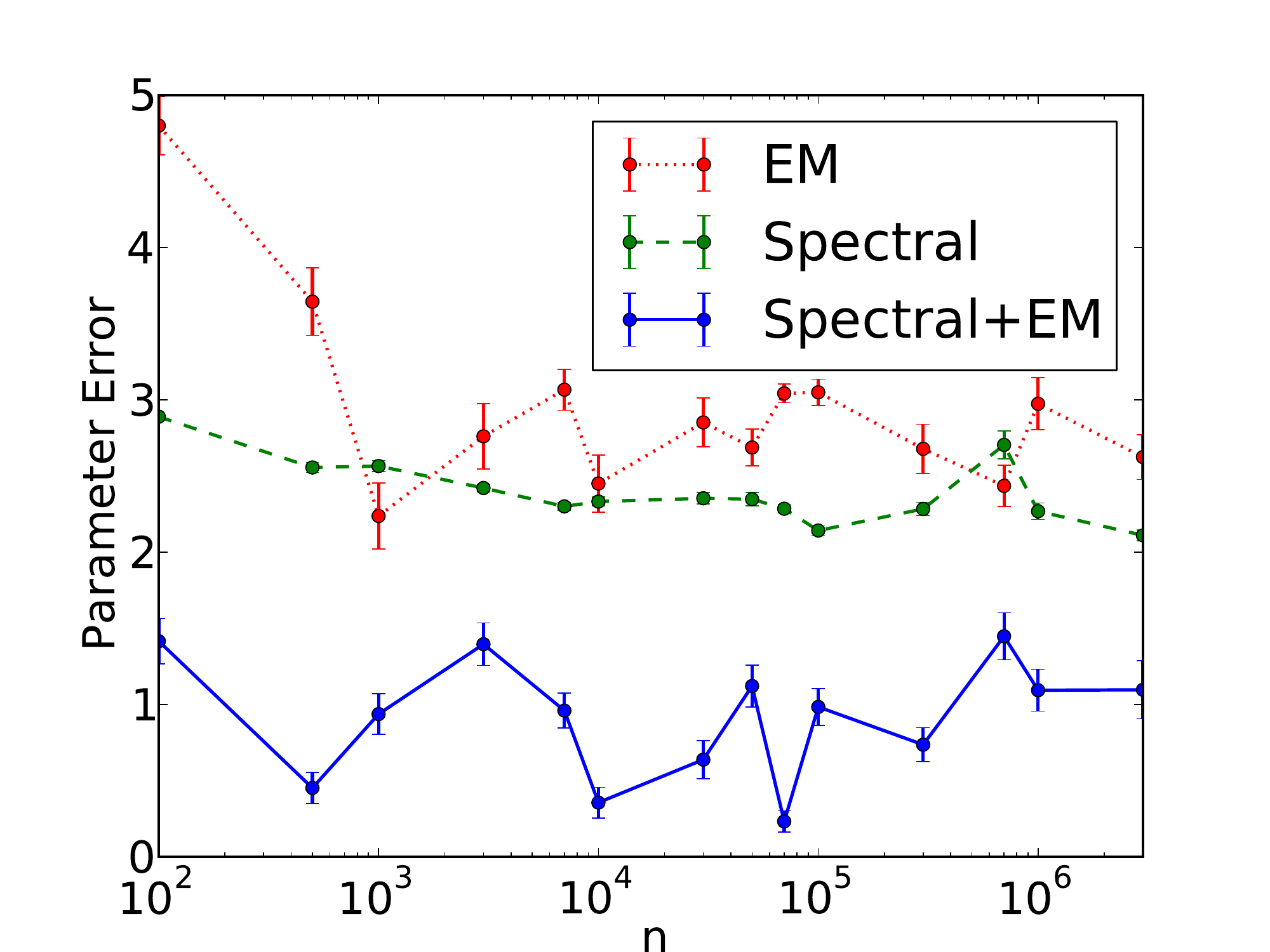}
  }
  \caption{Learning curves: parameter error as a function of the number of samples $n$ ($b = 1, d = 5, k = 3$).}
  \label{fig:vs-n}
\end{figure*}

\subsection{Results}

\begin{table*}[tbhp]
\caption{Parameter error $\|\theta^* - \hat \theta\|_F$ when the data is misspecified ($n = 500,000$).
Spectral+EM degrades slightly, but still outperforms EM overall.
}
\label{tbl:parameter-recovery-mis}
\vskip 0.15in
\begin{center}
\begin{small}
\begin{sc}

  \begin{tabular}{ r r r c c c }
\hline
\abovespace\belowspace
Variables ($b$) & Features ($d$) & Components ($k$) & Spectral & EM & Spectral + EM \\
\hline
\abovespace
 1 & 4 & 2 &  1.70 $\pm$ 0.85 & 0.29 $\pm$ 0.85 &  {\bf 0.03 $\pm$ 0.09} \\
 2 & 5 & 3 &  1.37 $\pm$ 0.85 & 0.44 $\pm$ 1.12 &  {\bf 0.00 $\pm$ 0.00} \\
 2 & 6 & 5 &  9.89 $\pm$ 4.46 & {\bf 2.53 $\pm$ 1.77} &  2.69 $\pm$ 1.83 \\
 2 & 8 & 7 & 23.07 $\pm$ 7.10 & 9.62 $\pm$ 1.03 &  {\bf 8.16 $\pm$ 2.31}  \\
\hline

\end{tabular}
\end{sc}
\end{small}
\end{center}
\vskip -0.1in
\end{table*}

\tableref{tbl:parameter-recovery} presents the Frobenius norm of the
difference between true and estimated parameters for the model, averaged
over 20 different random instances for each feature set and 10 attempts
for each instance. The experiments were run using $n = 500,000$ samples.

One of the main reasons for the high variance is the variation across
random instances; some are easy for EM to find the global minima and
others more difficult. In general, while Spectral Experts did not
recover parameters by itself extremely well, it provided a good initialization for
EM.

To study the stability of the solutions returned by Spectral Experts,
consider the histogram in \figureref{fig:hist}, which shows the recovery
errors of the algorithms over 170 attempts on a dataset with $b = 1, d = 4,
k = 3$. Typically, Spectral Experts returned a stable solution.
When these parameters were close enough to the true parameters, we found
that EM almost always converged to the global optima. Randomly
initialized EM only finds the true parameters a little over 10\% of the
time and shows considerably higher variance. 

\paragraph{Effect of number of data points}

In \figureref{fig:vs-n}, we show how the recovery error varies as we get
more data. Each data point shows the mean error over 10 attempts, with
error bars. We note that the recovery performance of EM does not
particularly improve; this suggests that EM continues to get stuck in
a local optima. The spectral algorithm's error decays slowly, and as it
gets closer to zero, EM initialized at the spectral parameters finds the
true parameters more often as well. This behavior highlights the
trade-off between statistical and computational error. 

\paragraph{Misspecified data}

To evaluate how robust the algorithm was to model mis-specification, we
removed large contiguous sections from $x \in [-0.5,-0.25] \cup
[0.25,0.5]$ and ran the algorithms again.
\tableref{tbl:parameter-recovery-mis} reports recovery errors in this
scenario. The error in the estimates grows larger for higher $d$.

\section{Conclusion}
\label{sec:conclusion}

In this paper, we developed a computationally efficient and statistically
consistent estimator for mixture of linear regressions.
Our algorithm, Spectral
Experts, regresses on higher-order powers of the data with a regularizer that
encourages low rank structure, followed by tensor factorization to recover the
actual parameters.  Empirically, we found Spectral Experts 
to be an excellent initializer for EM.

\paragraph{Acknowledgements}
We would like to thank Lester Mackey for his fruitful suggestions and
the anonymous reviewers for their helpful comments.

\bibliography{ref,pliang}
\bibliographystyle{icml2013}

\iftoggle{withappendix}{
\appendix
\onecolumn
\section{Proofs: Regression}
\label{sec:proofs:regression}
\setcounter{lemma}{0}


Let us review the regression problem set up in 
\iftoggle{withappendix}{\refsec{algo}}{\cite[Section 3]{ChagantyLiang2013}}. 
We assume we are given data $(x_i,y_i) \in \sD_p$
generated by the following process,
\begin{align*}
  y_i &= \innerp{M_p}{x_i\tp{p}} + b_p + \eta_p(x_i),
\end{align*}
where $M_p = \sum_{h=1}^k \pi_h \beta_h\tp{p}$, the expected value of $\beta_h\tp{p}$, $b_p$ is an estimable bias and $\eta_p(x)$ is zero mean noise. In particular, for $p \in \{1,2,3\}$,
we showed that $b_p$ and $\eta_p(x)$ were,
\begin{align}
  b_1 &= 0 \nonumber \\
  b_2 &= \E[\epsilon^2] \nonumber \\
  b_3 &= 2\E[\epsilon]\innerp{\hat M_1}{x} + \E[\epsilon^3] \nonumber \\
  \eta_1(x) &= \innerp{\beta_h - M_1}{x} + \epsilon \label{eqn:app:eta1} \\
  \eta_2(x) &= \innerp{\beta_h\tp{2} - M_2}{x\tp{2}} + 2 \epsilon \innerp{\beta_h}{x} + (\epsilon^2 - \E[\epsilon^2]) \label{eqn:app:eta2}\\
  \eta_3(x) &= \innerp{\beta_h\tp{3} - M_3}{x\tp{3}}
        + 3 \epsilon \innerp{\beta_h\tp{2}}{x\tp{2}} 
        + 3(\epsilon^2 \innerp{\beta_h}{x} - \E[\epsilon^2] \innerp{M_1}{x})
        + (\epsilon^3 - \E[\epsilon^3]). \label{eqn:app:eta3}
\end{align}

We then defined the observation operator $\opX_p(M_p) : \Re^{d\tp{p}} \to \Re^{n}$,
\begin{align*}
\opX_p(M_p; \sD_p)_i &\eqdef \innerp{M_p}{x\tp{p}_i},
\end{align*}
for $(x_i, y_i) \in \sD_p$. This let us succinctly represent the
low-rank regression problem for $p = 2,3$ as follows,
\begin{align*}
  \hat M_p &=
  \arg\min_{M_p \in \Re^{d\tp{p}}} \frac{1}{2n} \| y - b_p - \opX_p(M_p; \sD_p) \|^2_2 + \lambda_p \|M_p\|_*.
\end{align*}

Let us also recall the adjoint of the observation operator, $\opX_p^* : \Re^{n} \to \Re^{d^p}$,
\begin{align*}
  \opX_p^*(\eta_p; \sD_p) &= \sum_{x \in \sD_p} \eta_p(x) x\tp{p},
\end{align*}
where we have used $\eta_p$ to represent the vector $\left[\eta_p(x)\right]_{x \in \sD_p}$. 

\citet{Tomioka2011} showed that error in the estimated $\hat M_p$ can be
bounded as follows;

\begin{lemma}[\citet{Tomioka2011}, Theorem 1]
\label{lem:app:lowRank}
Suppose there exists a restricted strong convexity constant $\kappa(\opX_p)$ such that
$$\frac{1}{2n} \| \opX_p( \Delta )\|_2^2 \ge \kappa(\opX_p) \|\Delta\|^2_F \quad \text{and} \quad
\lambda_n \ge \frac{\|\opX_p^*(\eta_p)\|_\op}{n}.$$
Then the error of $\hat M_p$ is bounded as follows:
$\| \hat M_p - M_p^* \|_F \le \frac{\lambda_n \sqrt{k}}{\kappa(\opX_p)}$.
\end{lemma}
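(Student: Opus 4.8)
This statement is Theorem~1 of \citet{Tomioka2011}, so in the paper it can simply be invoked by citation; for completeness I sketch the standard argument for a nuclear-norm-regularized least-squares estimator, following the decomposable-regularizer framework of \citet{NegahbanWainwright2009}. The plan is to start from the optimality of $\hat M_p$ and to control the error $\Delta \eqdef \hat M_p - M_p^*$ entirely through the two hypotheses (restricted strong convexity and the lower bound on $\lambda_n$); note that the conclusion is deterministic once these hold.

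First I would derive the \emph{basic inequality}. Because $\hat M_p$ minimizes the regularized objective while $M_p^*$ is feasible, substituting $y - b_p = \opX_p(M_p^*) + \eta_p$ and expanding the squared residuals yields
\begin{align*}
\frac{1}{2n}\|\opX_p(\Delta)\|_2^2 &\le \frac1n \innerp{\eta_p}{\opX_p(\Delta)} + \lambda_n\left(\|M_p^*\|_* - \|\hat M_p\|_*\right).
\end{align*}
The adjoint enters through the cross term: by the definition of $\opX_p^*$ and the duality of the operator and nuclear norms, $\frac1n\innerp{\eta_p}{\opX_p(\Delta)} = \frac1n\innerp{\opX_p^*(\eta_p)}{\Delta} \le \frac1n\|\opX_p^*(\eta_p)\|_\op\,\|\Delta\|_*$, which the hypothesis on $\lambda_n$ absorbs into a term proportional to $\lambda_n\|\Delta\|_*$.

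The crux is the \emph{cone condition}. Here I would use that $M_p^*$ has rank $k$ together with the decomposability of the nuclear norm about the subspace $\bar{\mathcal M}$ spanned by the row and column spaces of $M_p^*$. Splitting $\Delta = \Delta_{\bar{\mathcal M}} + \Delta_{\bar{\mathcal M}^\perp}$, the estimate $\|M_p^*\|_* - \|\hat M_p\|_* \le \|\Delta_{\bar{\mathcal M}}\|_* - \|\Delta_{\bar{\mathcal M}^\perp}\|_*$ combined with the noise term forces $\Delta$ into the cone $\|\Delta_{\bar{\mathcal M}^\perp}\|_* \le 3\|\Delta_{\bar{\mathcal M}}\|_*$; it is precisely the factor-$2$ margin in the condition $\lambda_n \ge 2\|\opX_p^*(\eta_p)\|_\op/n$ that buys this cone, with the exact appendix constant then a matter of bookkeeping. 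On the cone, $\|\Delta\|_* \le 4\|\Delta_{\bar{\mathcal M}}\|_* \le 4\sqrt{2k}\,\|\Delta\|_F$, since $\Delta_{\bar{\mathcal M}}$ has rank at most $2k$ and any rank-$r$ matrix obeys $\|\cdot\|_* \le \sqrt{r}\,\|\cdot\|_F$.

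Finally I would invoke restricted strong convexity, $\frac{1}{2n}\|\opX_p(\Delta)\|_2^2 \ge \kappa(\opX_p)\|\Delta\|_F^2$, to lower-bound the left side of the basic inequality, and combine with the cone estimate to obtain $\kappa(\opX_p)\|\Delta\|_F^2 \lesssim \lambda_n\sqrt{k}\,\|\Delta\|_F$; dividing by $\|\Delta\|_F$ gives $\|\hat M_p - M_p^*\|_F \le \lambda_n\sqrt{k}/\kappa(\opX_p)$ up to the stated numerical constant. I expect the only real obstacle to be bookkeeping rather than ideas: tracking constants through the cone argument, reconciling the factor in the $\lambda_n$ hypothesis (which differs between the main-text and appendix forms of the lemma), and verifying that decomposability is applied consistently to each of the $p$ unfoldings, since for $p \ge 2$ the regularizer $\|\cdot\|_*$ is the \emph{averaged} unfolding nuclear norm rather than a single matrix norm.
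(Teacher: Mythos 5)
Your proposal is correct, and it matches the situation exactly: the paper offers no proof of this lemma at all---it is invoked verbatim as Theorem~1 of \citet{Tomioka2011}---and your sketch reproduces precisely the decomposable-regularizer argument (basic inequality from optimality, duality of the nuclear/operator norms on the cross term, cone condition, rank-$2k$ bound $\|\Delta_{\bar{\mathcal M}}\|_* \le \sqrt{2k}\,\|\Delta\|_F$, then restricted strong convexity) by which that cited theorem is actually established. You are also right that the only loose end is constants: the appendix statement's pairing of the factor-$1$ hypothesis on $\lambda_n$ with the constant-$1$ conclusion is not what the argument delivers (under that hypothesis it gives roughly $2\sqrt{2}\,\lambda_n\sqrt{k}/\kappa(\opX_p)$), and it is inconsistent with the main-text version of the same lemma, which states the factor-$2$ hypothesis and the constant $32$; treating this as bookkeeping to be reconciled, as you do, is the right call.
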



In this section, we will derive an upper bound on $\kappa(\opX_p)$ and
a lower bound on $\frac{1}{n} \| \opX_p^*(\eta_p) \|_\op$, allowing us to apply \reflem{lowRank} in the particular context of our noise setting.

\begin{lemma}[Lower bound on restricted strong convexity]
\label{lem:app:lowRankLower}
Let $\Sigma_p \eqdef \E[\cvec(x\tp{p})\tp{2}]$.
If $$n \ge \frac{16 (p!)^2 R^{4p}}{\sigmamin(\Sigma_p)^2} \left(1 + \sqrt{\frac{\log(1/\delta)}{2}}\right)^2,$$
then, with probability at least $1-\delta$,
$$\kappa(\opX_p) \ge \frac{\sigmamin(\Sigma_p)}{2}.$$
\end{lemma}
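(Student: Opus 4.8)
The plan is to show that the empirical design matrix, written in the \emph{collapsed} coordinates, concentrates around its population version $\Sigma_p$, and then to read off the restricted strong convexity constant from its smallest singular value. First I would exploit the identity $\innerp{\Delta}{x\tp{p}} = \innerp{\cvec(\Delta)}{\cvec(x\tp{p})}$, valid whenever $\Delta$ is symmetric (which we may assume, since $x\tp{p}$ is symmetric and only the symmetric part of $\Delta$ enters $\opX_p$), together with $\|\Delta\|_F = \|\cvec(\Delta)\|_2$ for symmetric $\Delta$. These let me rewrite
$$\frac{1}{n}\|\opX_p(\Delta)\|_2^2 = \cvec(\Delta)^\top \hat\Sigma_p\, \cvec(\Delta), \qquad \hat\Sigma_p \eqdef \frac1n\sum_{i=1}^n \cvec(x_i\tp{p})\tp{2},$$
so that $\frac1n\|\opX_p(\Delta)\|_2^2 \ge \sigmamin(\hat\Sigma_p)\|\Delta\|_F^2$ and hence $\kappa(\opX_p) \ge \sigmamin(\hat\Sigma_p)$. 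It then suffices to lower bound $\sigmamin(\hat\Sigma_p)$ by $\half\sigmamin(\Sigma_p)$.

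By Weyl's inequality, $\sigmamin(\hat\Sigma_p) \ge \sigmamin(\Sigma_p) - \|\hat\Sigma_p - \Sigma_p\|_\op$, so the whole claim reduces to the concentration bound $\|\hat\Sigma_p - \Sigma_p\|_\op \le \half\sigmamin(\Sigma_p)$ holding with probability at least $1-\delta$. The summands $Z_i \eqdef \cvec(x_i\tp{p})\tp{2}$ are i.i.d.\ rank-one positive semidefinite matrices whose operator norm equals $\|\cvec(x_i\tp{p})\|_2^2 = \|x_i\|_2^{2p} \le R^{2p}$.

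For the concentration I would combine a bound on the mean with a bounded-difference tail. For the mean, Jensen's inequality gives $\E\|\hat\Sigma_p - \Sigma_p\|_\op \le \sqrt{\E\|\hat\Sigma_p - \Sigma_p\|_F^2} \le R^{2p}/\sqrt n$, where the last step uses independence of the centered summands and $\|Z_i\|_F \le R^{2p}$. For the tail, viewing $g(x_1,\dots,x_n) = \|\hat\Sigma_p - \Sigma_p\|_\op$ as a function of the samples, replacing a single $x_i$ perturbs $\hat\Sigma_p$ by $\frac1n(Z_i - Z_i')$; since $Z_i$ and $Z_i'$ are both PSD, the operator norm of their difference is at most $\max(\|Z_i\|_\op, \|Z_i'\|_\op) \le R^{2p}$, so $g$ has bounded differences $R^{2p}/n$. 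McDiarmid's inequality then yields, with probability at least $1-\delta$,
$$\|\hat\Sigma_p - \Sigma_p\|_\op \le \frac{R^{2p}}{\sqrt n}\left(1 + \sqrt{\tfrac{\log(1/\delta)}{2}}\right).$$
Requiring the right-hand side to be at most $\half\sigmamin(\Sigma_p)$ and solving for $n$ gives the stated sample complexity, up to the combinatorial constants $(p!)^2$, which arise from bounding the collapsed-vectorization normalization factors more crudely than the exact $\|x\|_2^{2p}$.

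The step I expect to be the main obstacle is the concentration of $\|\hat\Sigma_p - \Sigma_p\|_\op$ with sharp dependence on $R$ and $\delta$: one must be careful to control the operator norm itself (rather than a dimension-dependent Frobenius surrogate) and to obtain the clean $(1 + \sqrt{\log(1/\delta)/2})$ form, which relies specifically on the rank-one PSD structure of the summands to get the tight $R^{2p}/n$ bounded-difference constant rather than $2R^{2p}/n$. A secondary subtlety is justifying the reduction to symmetric $\Delta$, so that the $\cvec$ identities may be applied.
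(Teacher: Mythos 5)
Your proposal is correct and follows essentially the same route as the paper: reduce $\frac{1}{n}\|\opX_p(\Delta)\|_2^2$ to the quadratic form $\cvec(\Delta)^\top \hat\Sigma_p \cvec(\Delta)$ for symmetric $\Delta$, apply Weyl's inequality, and concentrate $\hat\Sigma_p$ around $\Sigma_p$ via a Jensen-plus-McDiarmid argument. The one refinement is that you run McDiarmid directly on the operator norm, exploiting the rank-one PSD structure of the summands to get the bounded-difference constant $R^{2p}/n$, whereas the paper bounds the operator norm by the Frobenius norm and invokes its generic vector-norm concentration lemma with the cruder bound $p!R^{2p}$ on the summands; this is why you obtain the constant $4$ in place of the paper's $16(p!)^2$, so the lemma's stated hypothesis suffices a fortiori.
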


\begin{proof}
  Recall that $\kappa(\opX_p)$ is defined to be a constant such that the following inequality holds, 
  $$\frac{1}{n} \|\opX_p(\Delta)\|_2^2 \ge \kappa(\opX_p) \|\Delta\|^2_F,$$
where
$$\|\opX_p(\Delta)\|_2^2 = \sum_{(x,y) \in \sD_p} \innerp{\Delta}{x\tp{p}}^2.$$
To proceed, we will unfold the tensors $\Delta$ and $x\tp{p}$ to get a lower bound in terms of $\|\Delta\|^2_F$. This will allow us to choose an appropriate value for $\kappa(\opX_p)$ that will hold with high probability.

First, note that $x\tp{p}$ is symmetric, and thus
$\innerp{\Delta}{x\tp{p}} = \innerp{\cvec{\Delta}}{\cvec{x\tp{p}}}$.
This allows us to simplify $\|\opX_p(\Delta)\|_2^2$ as follows,
\begin{align*}
  \frac{1}{n} \|\opX_p(\Delta)\|_2^2 
    &= \frac{1}{n} \sum_{(x,y) \in \sD_p} \innerp{\Delta}{x\tp{p}}^2 \\
    &= \frac{1}{n} \sum_{(x,y) \in \sD_p} \innerp{\cvec(\Delta)}{\cvec(x\tp{p})}^2 \\
    &= \frac{1}{n} \sum_{(x,y) \in \sD_p} \trace( \cvec(\Delta)\tp{2} \cvec(x\tp{p})\tp{2} ) \\
    &= \trace\left( \cvec(\Delta)\tp{2} \frac{1}{n} \sum_{(x,y) \in \sD_p} \cvec(x\tp{p})\tp{2} \right).
\end{align*}
Let $\hat\Sigma_p \eqdef \frac{1}{n}
\sum_{(x,y) \in \sD_p} \cvec(x\tp{p})\tp{2}$, so that $\frac{1}{n}
\|\opX_p(\Delta)\|_2^2 = \trace(\cvec(\Delta)\tp{2} \hat\Sigma_p)$. 
For symmetric $\Delta$
, $\|\cvec(\Delta)\|_2 = \|\Delta\|_F$\footnote{
The $\Delta$ correspond to residuals $\hat M_p - M_p$, which can easily be shown to always be symmetric.}.
Then, we have 
\begin{align*}
\frac{1}{n} \|\opX_p(\Delta)\|_2^2 
  &= \trace(\cvec(\Delta)\tp{2} \hat\Sigma_p) \\
  &\ge \sigmamin(\hat\Sigma_p) \|\Delta\|_F^2.
\end{align*}
By Weyl's theorem, $$\sigmamin(\hat\Sigma_p) \ge
\sigmamin(\Sigma_p) - \|\hat\Sigma_p - \Sigma_p\|_\Lop.$$
Since $\|\hat\Sigma_p - \Sigma_p\|_\Lop \le \|\hat\Sigma_p - \Sigma_p\|_{F}$,
it suffices to show that the empirical covariance concentrates in Frobenius norm.
Applying \reflem{conc-norms}, with
probability at least $1 - \delta$, $$\| \hat\Sigma_p - \Sigma_p \|_F
\le \frac{2 \|\Sigma_p\|_F}{\sqrt n} \left( 1 + \sqrt{\frac{\log(1/\delta)}{2}} \right).$$
Now we seek to control $\|\Sigma_p\|_F$.
Since $\|x\|_2 \le R$, we can use the
bound $$\| \Sigma_p \|_F \le p! \| \vvec(x\tp{p})\tp{2} \|_F \le p! R^{2p}.$$

Finally, $\|\hat\Sigma_p - \Sigma_p\|_\op \le \sigmamin(\Sigma_p)/2$ with probability at least $1 - \delta$ if,
$$n \ge \frac{16 (p!)^2 R^{4p}}{\sigmamin(\Sigma_p)^2} \left(1 + \sqrt{\frac{\log(1/\delta)}{2}}\right)^2.$$

\end{proof}

\begin{lemma}[Upper bound on adjoint operator]
\label{lem:app:lowRankUpper}
With probability at least $1-\delta$, the following holds,
\begin{align*}
  \frac{1}{n} \|\opX_1^*(\eta_1)\|_\op
      &\le 2 \frac{R (2LR + S)}{\sqrt{n}} \left( 1 + \sqrt{\frac{\log(3/\delta)}{2}} \right) \\
  \frac{1}{n}  \|\opX_2^*(\eta_2)\|_\op 
      &\le 2 \frac{(4L^2 R^2 + 2 S L R + 4S^2)R^2}{\sqrt{n}} \left( 1 + \sqrt{\frac{\log(3/\delta)}{2}} \right) \\
  \frac{1}{n} \|\opX_3^*(\eta_3)\|_\op 
      &\le 2 \frac{(8L^3 R^3 + 3 L^2 R^2 S + 6 L R S^2 + 2S^3) R^3}{\sqrt{n}} \left( 1 + \sqrt{\frac{\log(6/\delta)}{2}} \right) \\
  &\quad + 3 R^4 S^2 \left(\frac{128R(2LR+S)}{\sigmamin(\Sigma_1) \sqrt{n}} \left(1 + \sqrt{\frac{\log(6/\delta)}{2}} \right) \right).
\end{align*}
\end{lemma}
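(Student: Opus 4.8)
The plan is to control each $\tfrac1n\|\opX_p^*(\eta_p)\|_\op$ by recognizing that, with $\opX_p^*(\eta_p) = \sum_{(x,y)\in\sD_p}\eta_p(x)\,x\tp p$, the quantity $\tfrac1n\opX_p^*(\eta_p)$ is an empirical average of i.i.d.\ random tensors, and then applying a concentration inequality in Frobenius norm. First I would pass from the operator norm to the Frobenius norm: for any tensor $X$ each unfolding satisfies $\|X_{(i)}\|_\op \le \|X_{(i)}\|_F = \|X\|_F$, so $\|X\|_\op \le \|X\|_F$ and it suffices to bound $\tfrac1n\|\opX_p^*(\eta_p)\|_F$. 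For $p\in\{1,2\}$ the summands $\eta_p(x)\,x\tp p$ are i.i.d.\ with mean zero: since $\E[\beta_h\tp p] = M_p$ by definition and $\epsilon$ is independent and zero-mean, every term of $\eta_p$ integrates to zero against $x\tp p$. Hence $\tfrac1n\opX_p^*(\eta_p)$ concentrates around $0$, and I would invoke the same Frobenius-norm deviation inequality for bounded i.i.d.\ quantities used in the proof of \reflem{app:lowRankLower}, which produces exactly the $\tfrac{2}{\sqrt n}\bigl(1+\sqrt{\log(1/\delta)/2}\bigr)$ shape.

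The polynomial constant in front of each bound then comes from a deterministic almost-sure bound on the summands, since $\|\eta_p(x)\,x\tp p\|_F = |\eta_p(x)|\,\|x\|_2^p \le |\eta_p(x)|\,R^p$. It remains only to bound $|\eta_p(x)|$ by the triangle inequality together with $\|\beta_h\|_2\le L$, $\|x\|_2\le R$, $|\epsilon|\le S$ (and $|\epsilon^j - \E[\epsilon^j]|\le 2S^j$, $\|M_p\|_\op\le L^p$). For instance $|\eta_1(x)|\le \|\beta_h - M_1\|_2\,R + |\epsilon| \le 2LR + S$; the same mechanical estimates applied to $\eta_2$ and $\eta_3$ yield the polynomials $4L^2R^2 + 2SLR + 4S^2$ and $8L^3R^3 + 3L^2R^2S + 6LRS^2 + 2S^3$ that appear in the statement. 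Multiplying by $R^p$ gives the leading factor of each line, and a union bound over the three regressions (allotting $\delta/3$ to each) supplies the $\log(3/\delta)$ terms.

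The one genuine complication, and the step I expect to be the main obstacle, is $p=3$, where $\eta_3$ is \emph{not} exactly mean zero because it is built from the estimate $\hat M_1$ rather than the true $M_1$. I would split $\eta_3 = \tilde\eta_3 + 3\E[\epsilon^2]\innerp{M_1 - \hat M_1}{x}$, where $\tilde\eta_3$ is the idealized noise \refeqn{app:eta3} (using $M_1$), which \emph{is} mean zero and is handled exactly as above, giving the first line of the $p=3$ bound. The correction contributes $\tfrac1n\sum_i 3\E[\epsilon^2]\innerp{M_1-\hat M_1}{x_i}\,x_i\tp3$ to the adjoint; bounding its Frobenius norm deterministically produces the factor $3R^4 S^2\,\|\hat M_1 - M_1\|_2$, and I would control $\|\hat M_1 - M_1\|_2$ by chaining the $p=1$ analysis: \reflem{app:lowRank} combined with \reflem{app:lowRankLower} (which gives $\kappa(\opX_1)\ge \sigmamin(\Sigma_1)/2$) converts the already-established $p=1$ adjoint bound into $\|\hat M_1 - M_1\|_2 = O\!\bigl(\tfrac{R(2LR+S)}{\sigmamin(\Sigma_1)\sqrt n}(1+\sqrt{\log(6/\delta)/2})\bigr)$, which is precisely the second term of the displayed $p=3$ bound (with the $\sigmamin(\Sigma_1)$ factor and the constant $128$). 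Because this correction invokes a second high-probability event nested inside the $p=3$ estimate, the union bound there is over two sub-events, which is what upgrades $\log(3/\delta)$ to $\log(6/\delta)$. The delicate part is keeping the nesting of these concentration events and their probability budgets consistent, and tracking that the $\hat M_1$-error term enters with the correct powers of $R$ and $S$.
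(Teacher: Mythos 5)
Your proposal is correct and follows essentially the same route as the paper's proof: reduce the operator norm to the Frobenius norm, bound each $|\eta_p(x)|$ deterministically via $L, R, S$, apply the McDiarmid-based concentration lemma (\reflem{conc-norms}) to the zero-mean parts, and handle the $p=3$ bias from $\hat M_1$ by chaining the $p=1$ low-rank regression bound (\reflem{app:lowRank} with $\kappa(\opX_1)\ge\sigmamin(\Sigma_1)/2$), finishing with the same $\delta/3,\delta/3,\delta/6,\delta/6$ probability allocation. Your explicit splitting $\eta_3 = \tilde\eta_3 + 3\E[\epsilon^2]\innerp{M_1-\hat M_1}{x}$ is just a cleaner phrasing of the paper's step of centering around $\E[\eta_3(x)x\tp{3}\mid x]$.
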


It follows that, with probability at least $1-\delta$,
\begin{align*}
  \frac{1}{n} \|\opX_p^*(\eta_p)\|_\op
  &= O\left( L^p S^p R^{2p} \sigmamin(\Sigma_1)^{-1} \sqrt{\frac{\log(1/\delta)}{n}} \right),
\end{align*}
for each $p \in \{1,2,3\}$.

\begin{proof}
Let $\hat\E_p[f(x,\epsilon,h)]$ denote the empirical expectation over
the examples in dataset $\sD_p$ (recall the $\sD_p$'s are independent to
simplify the analysis).  By definition,
$$\frac1n \|\opX_p^*(\eta_p)\|_\op = \left\| \hat\E_p \left[\eta_p(x) x\tp{p} \right] \right\|_\op $$
for $p \in \{1,2,3\}$. To proceed, we will bound each $\eta_p(x)$
\iftoggle{withappendix}{}{
, defined in \refeqn{app:eta1}, \refeqn{app:eta2} and \refeqn{app:eta3},} and use \reflem{conc-norms} to bound $\|
\hat\E_p[\eta_p(x) x\tp{p}] \|_F$. The Frobenius norm to bounds the
operator norm, completing the proof.


\paragraph{Bounding $\eta_p(x)$.}
Using the assumptions that $\|\beta_h\|_2 \le L$, $\|x\|_2 \le R$ and
$|\epsilon| \le S$, it is easy to bound each $\eta_p(x)$,
\begin{align*}
  \eta_1(x) &= \innerp{\beta_h - M_1}{x} + \epsilon \\
            &\le \|\beta_h - M_1\|_2 \|x\|_2 + |\epsilon| \\
            &\le 2LR + S \\
  \eta_2(x) 
    &= \innerp{\beta_h\tp{2} - M_2}{x\tp{2}} + 2 \epsilon \innerp{\beta_h}{x} + (\epsilon^2 - \E[\epsilon^2]) \\
    &\le \|\beta_h\tp{2} - M_2\|_F \|x\tp{2}\|_F + 2 |\epsilon| \|\beta_h\|_2\|x\|_2 + |\epsilon^2 - \E[\epsilon^2]| \\
    &\le (2L)^2 R^2 + 2 S L R + (2S)^2 \\
  \eta_3(x) &= \innerp{\beta_h\tp{3} - M_3}{x\tp{3}}
        + 3 \epsilon \innerp{\beta_h\tp{2}}{x\tp{2}} \\
        &\quad + 3\left(\epsilon^2 \innerp{\beta_h}{x} - \E[\epsilon^2] \innerp{\hat M_1}{x}\right)
        + (\epsilon^3 - \E[\epsilon^3]) \\
  &\le \|\beta_h\tp{3} - M_3\|_F\|x\tp{3}\|_F
        + 3 |\epsilon| \|\beta_h\tp{2}\|_F \|x\tp{2}\|_F  \\
        &\quad + 3 \left( |\epsilon^2|~\|\beta_h\|_F\|x\|_F + \left|\E[\epsilon^2]\right|~\|\hat M_1\|_2\|x\|_2 \right)
        + |\epsilon^3| + \left| \E[\epsilon^3] \right| \\
  &\le (2L)^3 R^3 + 3 S L^2 R^2 + 3 ( S^2 L R + S^2 L R ) + 2S^3.
\end{align*}
We have used inequality $\|M_1 - \beta_h\|_2 \le 2L$ above. 

\paragraph{Bounding $\left\| \hat\E[\eta_p(x)x\tp{p}] \right\|_F$.}
We may now apply the above bounds on $\eta_p(x)$ to bound $\|\eta_p(x) x\tp{p}\|_F$, using the fact that $\|c X\|_F \le c\|X\|_F$.
By \reflem{conc-norms}, each of the following holds with probability at least $1-\delta_1$,
\begin{align*}
    \left\|\hat\E_1[\eta_1(x) x] \right\|_2
    &\le 2 \frac{R (2LR + S)}{\sqrt{n}} \left( 1 + \sqrt{\frac{\log(1/\delta_1)}{2}} \right) \\
  \left\|\hat\E_2[\eta_2(x) x\tp{2}] \right\|_F
      &\le 2 \frac{(4L^2 R^2 + 2 S L R + 4S^2)R^2}{\sqrt{n}} \left( 1 + \sqrt{\frac{\log(1/\delta_2)}{2}} \right) \\
  \left\|\hat\E_3[\eta_3(x) x\tp{3}] - \E[\eta_3(x) x\tp{3} \mid x] \right\|_F
      &\le 2 \frac{(8L^3 R^3 + 3 L^2 R^2 S + 6 L R S^2 + 2S^3) R^3}{\sqrt{n}} \left( 1 + \sqrt{\frac{\log(1/\delta_3)}{2}} \right).
\end{align*}

Recall that $\eta_3(x)$ does not have zero mean, so we must bound the bias:
\begin{align*}
  \| \E[\eta_3(x) x\tp{3} \mid x] \|_F &= \|3 \E[\epsilon^2] \innerp{M_1 - \hat M_1}{x} x\tp{3} \|_F \\
    &\le 3 \E[\epsilon^2] \|M_1 - \hat M_1\|_2 \|x\|_2 \|x\tp{3}\|_F.
\end{align*}
Note that in all of this, both $\hat M_1$ and $M_1$ are treated as
constants. 
Further, by applying \reflem{app:lowRank} to $M_1$, we have a bound on $\|M_1 - \hat
M_1\|_2$; with probability at least $1-\delta_3$,
\begin{align*}
  \| M_1 - \hat M_1 \|_2
  &\le \frac{32 \lambda^{(1)}_n}{\kappa(\opX_1)} \\
  &\le 32 \frac{2R(2LR+S)}{\sqrt{n}}\left(1 + \sqrt{\frac{\log(1/\delta_3)}{2}}\right) \frac{2}{\sigmamin(\Sigma_1)}.
\end{align*}

So, with probability at least $1 - \delta_3$,
\begin{align*}
  \| \E[\eta_3(x) x\tp{3} \mid x] \|_F
  &\le 3 R^4 S^2 \left(\frac{128 R(2LR+S)}{\sigmamin(\Sigma_1) \sqrt{n}} \left(1 + \sqrt{\frac{\log(1/\delta_3)}{2}} \right) \right).
\end{align*}

Finally, taking $\delta_1 = \delta/3, \delta_2 = \delta/3, \delta_3
= \delta/6$, and taking the union bound over the bounds for $p \in
\{1,2,3\}$, we get our result.
\end{proof}


\section{Proofs: Tensor Decomposition}
\label{sec:proofs:tensors}

Once we have estimated the moments from the data through regression, we apply the robust tensor eigen-decomposition algorithm to recover the parameters, $\beta_h$ and $\pi$. However, the algorithm is guaranteed to work only for symmetric matrices with (nearly) orthogonal eigenvectors, so, as a first step, we will need to whiten the third-order moment tensor using the second moments. We then apply the tensor decomposition algorithm to get the eigenvalues and eigenvectors. Finally, we will have to undo the transformation by applying an un-whitening step. In this section, we present error bounds for each step, and combine them to prove the following lemma,
\begin{lemma}[Tensor Decomposition with Whitening]
  \label{lem:app:tensorPower} 
  Let $M_2 = \sum_{h=1}^{k} \pi_h \beta_h\tp{2}$,
  $M_3 = \sum_{h=1}^{k} \pi_h \beta_h\tp{3}$.
  Let $\aerr{M_2} \eqdef \|\hat M_2 - M_2\|_\op$ and
  $\aerr{M_3} \eqdef \|\hat M_3 - M_3\|_\op$ both be such that,

\begin{align*}
  \max\{\aerr{M_2}, \aerr{M_3}\} &\le \min\Bigg\{
    \frac{\sigma_k(M_2)}{2},
      \left(\frac{15 k \pi_{\max}^{5/2}
      \left(24 \frac{\| {M_3} \|_\op}{\sigma_k(M_2)} + 2\sqrt{2} \right)}{2\sigma_k(M_2)^{3/2}} \right)^{-1} \epsilon,\\
     &\quad 
      \left(
      4\sqrt{3/2} \|M_2\|_\op^{1/2} \sigma_k(M_2)^{-1} + 8 k \pi_{\max} 
      \|M_2\|_\op^{1/2} \sigma_k(M_2)^{-3/2}
      \left(24 \frac{\| {M_3} \|_\op}{\sigma_k(M_2)} + 2\sqrt{2} \right) \right)^{-1} \epsilon
  \Bigg\}
\end{align*}
  
for some $\epsilon < \frac{1}{2\sqrt{\pi_{\max}}}$.

  Then, there exists a permutation of indices such that  the parameter
  estimates found in step 2 of 
  \iftoggle{withappendix}{%
  \algorithmref{algo:spectral-experts}
  }{%
  \citet[Algorithm 1]{ChagantyLiang2013}
  }
  satisfy the following with probability at least $1 - \delta$,
  \begin{align*}
  \|\hat \pi - \pi \|_{\infty} &\le \epsilon \\
  \|\hat \beta_h - \beta_h\|_2 &\le \epsilon.
  \end{align*}
  for all $h \in [k]$.
\end{lemma}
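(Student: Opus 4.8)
The plan is to propagate the two operator-norm errors $\aerr{M_2}$ and $\aerr{M_3}$ through the three substeps of step 2 — whitening, the robust tensor power method, and unwhitening — exploiting the orthogonal decomposition that whitening exposes, and invoking \citet{AnandkumarGeHsu2012} as a black box for the middle step. First I would fix the exact noiseless picture. Since $M_2 = \sum_h \pi_h \beta_h\tp{2}$ has rank $k$, a whitening matrix $W \in \Re^{d \times k}$ with $W^\top M_2 W = I_k$ exists; taking the top-$k$ SVD $M_2 = U\Lambda U^\top$ one may set $W = U\Lambda^{-1/2}$, so that $\|W\|_\op = \sigmamin(M_2)^{-1/2} = \sigma_k(M_2)^{-1/2}$ and $\|(W^\top)^\dagger\|_\op = \|M_2\|_\op^{1/2}$. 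Setting $\tilde\beta_h = \sqrt{\pi_h}\, W^\top \beta_h$ makes $\{\tilde\beta_h\}$ orthonormal, and $T = M_3(W,W,W) = \sum_h \pi_h^{-1/2}\tilde\beta_h\tp{3}$ is orthogonally decomposable with eigenvalues $a_h = \pi_h^{-1/2}$ and eigenvectors $\tilde\beta_h$. The identities $\pi_h = a_h^{-2}$ and $\beta_h = (W^\top)^\dagger(a_h \tilde\beta_h)$ are exactly what step 2(c) inverts, so the whole task reduces to controlling perturbations of $a_h$, $\tilde\beta_h$, $W$, and $(W^\top)^\dagger$.

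Next I would bound each error source. The condition $\aerr{M_2} \le \sigma_k(M_2)/2$ (the first entry of the min) ensures, via Weyl's inequality, that $\hat M_2$ retains a well-separated $k$-th singular value, so that $\hat W$ is well defined and the target rank-$k$ subspace is stable. A Davis--Kahan/Wedin subspace bound together with perturbation of the inverse square root then gives $\|\hat W - W\|_\op = O(\aerr{M_2}\,\sigma_k(M_2)^{-3/2})$ and, for the unwhitening matrix, $\|(\hat W^\top)^\dagger - (W^\top)^\dagger\|_\op = O(\aerr{M_2}\,\|M_2\|_\op^{1/2}\sigma_k(M_2)^{-1})$. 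I would then bound the whitened-tensor error by splitting $\hat M_3(\hat W,\hat W,\hat W) - M_3(W,W,W)$ into the term $[\hat M_3 - M_3](\hat W,\hat W,\hat W)$, controlled by $\aerr{M_3}\|\hat W\|_\op^3 = O(\aerr{M_3}\,\sigma_k(M_2)^{-3/2})$, plus a telescoping sum over the three tensor slots that swaps $\hat W$ for $W$, controlled by $\|M_3\|_\op\,\|\hat W - W\|_\op\,\sigma_k(M_2)^{-1}$. Substituting the whitening bound, the whitened perturbation satisfies $\|\hat T - T\|_\op = O\bigl(\aerr{M_3}\,\sigma_k(M_2)^{-3/2} + \|M_3\|_\op\,\aerr{M_2}\,\sigma_k(M_2)^{-5/2}\bigr)$, which is precisely the combination $\sigma_k(M_2)^{-3/2}\bigl(24\|M_3\|_\op/\sigma_k(M_2) + 2\sqrt{2}\bigr)\max\{\aerr{M_2},\aerr{M_3}\}$ that the two remaining threshold entries are engineered around.

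With $\|\hat T - T\|_\op$ controlled, I would apply the robust tensor power method guarantee of \citet{AnandkumarGeHsu2012}: under the smallness condition $\epsilon < \tfrac{1}{2\sqrt{\pi_{\max}}}$ (which enforces $\|\hat T - T\|_\op$ small relative to the smallest eigenvalue $a_{\min} = \pi_{\max}^{-1/2}$), there is a permutation of the recovered pairs with $|\hat a_h - a_h| = O(\|\hat T - T\|_\op)$ and $\|\hat v_h - \tilde\beta_h\|_2 = O(\|\hat T - T\|_\op/a_{\min}) = O(\pi_{\max}^{1/2}\|\hat T - T\|_\op)$, the probability $1-\delta$ entering here through the random restarts, and a further factor of $k$ accumulating across the $k$ deflation steps. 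Finally I would unwhiten. Since $\pi_h = a_h^{-2}$ with derivative magnitude $2a_h^{-3} = 2\pi_h^{3/2} \le 2\pi_{\max}^{3/2}$, the eigenvalue error yields $\|\hat\pi - \pi\|_\infty = O(k\,\pi_{\max}^{5/2}\,\sigma_k(M_2)^{-3/2}(\dots)\,\aerr{})$, matching the $\pi_{\max}^{5/2}$-weighted second entry of the min. For $\hat\beta_h = (\hat W^\top)^\dagger(\hat a_h \hat v_h)$ I would expand $\hat\beta_h - \beta_h$ into the unwhitening-matrix error (scaled by the eigenvalue magnitude) and the eigenpair error (scaled by $\|(W^\top)^\dagger\|_\op = \|M_2\|_\op^{1/2}$), producing the $\|M_2\|_\op^{1/2}$-weighted third entry. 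Taking the threshold to be the minimum of the three quantities then forces each of $\|\hat\pi - \pi\|_\infty$ and $\|\hat\beta_h - \beta_h\|_2$ below $\epsilon$.

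I expect the main obstacle to be the whitening/unwhitening perturbation. The operations $\Lambda^{-1/2}$ and $(W^\top)^\dagger = U\Lambda^{1/2}$ amplify errors in the smallest retained singular value $\sigma_k(M_2)$, so correctly bookkeeping the powers of $\sigma_k(M_2)$ and $\|M_2\|_\op$ through the subspace-perturbation bounds, and then composing the whitening error, the whitened-tensor error, and the tensor-power-method error with \emph{explicit} constants rather than mere orders, is the delicate part — since the stated threshold tracks exact constants such as $24$, $2\sqrt{2}$, $4\sqrt{3/2}$, and $15$ that arise jointly from the \citet{AnandkumarGeHsu2012} bounds and this three-stage composition.
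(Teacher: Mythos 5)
Your proposal follows essentially the same route as the paper's proof: construct the whitener $W$ of $M_2$ so that $T = M_3(W,W,W)$ has orthogonal eigenpairs $(\pi_h^{-1/2}, \sqrt{\pi_h}\,W^\top\beta_h)$, telescope the whitened-tensor error into the combination $\sigma_k(M_2)^{-3/2}\left(24\|M_3\|_\op/\sigma_k(M_2) + 2\sqrt{2}\right)\max\{\aerr{M_2},\aerr{M_3}\}$, invoke the robust tensor power method of Anandkumar et al.\ as a black box, and then unwhiten, collecting the smallness conditions into the three entries of the min. The only sub-step you treat differently is the whitening/unwhitening perturbation bound, which you would obtain from Davis--Kahan/Wedin plus perturbation of the inverse square root, whereas the paper proves it directly (its whitening-matrix lemma, following Hsu and Kakade) by aligning $W = \hat W U D^{-1/2} U^\top$ and applying Weyl's inequality; both give the same rates, so this is a cosmetic difference.
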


\begin{proof}
We will use the general notation, $\aerr{X} \eqdef \|\hat X - X\|_\Lop$
to represent the error of the estimate, $\hat X$, of $X$ in the operator
norm. 

Through the course of the proof, we will make some assumptions on errors
that allow us simplify portions of the expressions. At the end of
the proof, we will collect these conditions together to state the
assumptions on $\epsilon$ above.

\paragraph{Step 1: Whitening}
Much of this matter has been presented in \citet[Lemma 11, 12]{hsu13spherical}. We present our own version for completeness.

Let $W$ and $\hat W$ be the whitening matrices for $M_2$ and $\hat M_2$
respectively. Also define $\Winv$ and $\Whinv$ to be their
pseudo-inverses.

We will first show that the whitened tensor $T = M_3(W,W,W)$ is symmetric with orthogonal
eigenvectors. Recall that $M_2 = \sum_h \pi_h \beta_h\tp{2}$, so,
\begin{align*}
  I 
    &= W^T M_2 W\\
    &= \sum_h \pi_h W^T \beta_h\tp{2} W\\
    &= \sum_h (\underbrace{\sqrt{\pi_h} W^T \beta_h}_{v_h})\tp{2}.
\end{align*}
Thus $W \beta_h = \frac{v_h}{\sqrt{\pi_h}}$,
where $v_h$ form an orthonormal
basis. Applying the same whitening transform to $M_3$, we get, 
\begin{align*}
  M_3 &= \sum_h \pi_h \beta_h\tp{3} \\
  M_3(W,W,W) &= \sum_h \pi_h (W^T \beta_h)\tp{3} \\
  &= \sum_h \frac{1}{\sqrt{\pi_h}} v_h\tp{3}.
\end{align*}
Consequently, $T$ has an orthogonal decomposition with eigenvectors $v_h$ and eigenvalues $1/\sqrt{\pi_h}$.

Let us now study how far $\hat T = \hat M_3(\hat W, \hat W, \hat W)$ differs from $T$, in terms of the
errors of $M_2$ and $M_3$, following \citet{AnandkumarGeHsu2012}. We note that while $\hat T$ is also symmetric, it may not have an orthogonal decomposition. 
To do so, we use the triangle inequality to
break the difference into a number of simple terms, that differ in exactly one element. We will then apply $\|M_3(W,W,W-\hat W)\|_\op \le \|M_3\|_\op \|W\|_\op \|W\|_\op \|W - \hat W\|_\op$.

\begin{align*}
  \aerr{T} &= \|M_3(W,W,W) - \hat M_3(\hat W,\hat W, \hat W)\|_\op \\
           &\le 
           \| {M_3}(W,W,W) - {M_3}(W,W,\hat W) \|_\op
           + \| {M_3}(W,W,\hat W) - {M_3}(W, \hat W, \hat W)\|_\op \\
           &\quad 
           + \|{M_3}(W,\hat W,\hat W) - {M_3}(\hat W, \hat W, \hat W)\|_\op 
           + \|{M_3}(\hat W,\hat W,\hat W) - \hat {M_3}(\hat W,\hat W, \hat W)\|_\op \\
           &\le 
           \| {M_3}(W,W,W - \hat W) \|_\op
           + \| {M_3}(W,W - \hat W,\hat W) |_\op 
           + \|{M_3}(W - \hat W,\hat W,\hat W) |_\op \\
           &\le
           \| {M_3} \|_\op \|W\|^2_\op \aerr{W} +
            \| {M_3} \|_\op \|\hat W\|_\op \|W\|_\op \aerr{W} +
            \| {M_3} \|_\op \|\hat W\|^2_\op \aerr{W} +
            \aerr{M_3} \|\hat W\|^3_\op  \\
           &\le
           \| {M_3} \|_\op (\|W\|^2_\op + \|\hat W\|_\op \|W\|_\op + \|\hat W\|^2_\op) \aerr{W} +
            \aerr{M_3} \|\hat W\|^3_\op 
\end{align*}
We can relate $\|\hat W\|$ and $\aerr{W}$ to $\aerr{M_2}$ using 
\reflem{white}, for which we need the following condition,
\begin{condition}
  Let $\aerr{M_2} < \sigma_k(M_2)/3$.
\end{condition}

Then,
\begin{align*}
  \|\hat W\|_\op 
  &\le \frac{\sigma_k(M_2)^{-1/2}}{\sqrt{1 - \frac{\aerr{M_2}}{\sigma_k(M_2)} }} \\
                 &\le \sqrt{2} \sigma_k(M_2)^{-1/2} \\
  \aerr{W} 
  &\le 2 \sigma_k(M_2)^{-1/2} \frac{\frac{\aerr{M_2}}{\sigma_k(M_2)}}{1 - \frac{\aerr{M_2}}{\sigma_k(M_2)}} \\
           &\le 4 \sigma_k(M_2)^{-3/2} \aerr{M_2}.
\end{align*}
Thus,
\begin{align*}
  \aerr{T} &\le 
  6 \| {M_3} \|_\op \|W\|^2_\op (4 \sigma_k(M_2)^{-3/2}) \aerr{M_2} +
  \aerr{M_3} 2\sqrt{2} \|W\|^3_\op \\
  &\le 
  24 \| {M_3} \|_\op \sigma_k(M_2)^{-5/2} \aerr{M_2} +
  2\sqrt{2} \sigma_k(M_2)^{-3/2} \aerr{M_3} \\
  &\le 
    \sigma_k(M_2)^{-3/2}
      \left(24 \frac{\| {M_3} \|_\op}{\sigma_k(M_2)} + 2\sqrt{2} \right)
      \max\{ \aerr{M_2}, \aerr{M_3} \}.
\end{align*}

\paragraph{Step 2: Decomposition}

We have constructed $T$ to be a symmetric tensor with orthogonal
eigenvectors. We can now apply the results of \citet[Theorem
5.1]{AnandkumarGeHsu2012} to bound the error in the eigenvalues,
$\lambda_W$, and eigenvectors, $\omega$, returned by the robust tensor
power method;
\newcommand{\lW}{\lambda_W}
\newcommand{\lhW}{{\hat\lambda}_W}
\newcommand{\mW}{\omega}
\newcommand{\mhW}{{\hat\omega}}
\begin{align*}
  \|\lW - \lhW \|_{\infty} 
  &\le \frac{5 k \aerr{T}}{(\lW)_{\min}} \\
\|\mW_h -\mhW_h \|_2 
&\le \frac{8 k \aerr{T}}{(\lW)_{\min}^2},
\end{align*}
for all $h \in [k]$, where $(\lW)_{\min}$ is the smallest
eigenvalue of $T$. 

\paragraph{Step 3: Unwhitening}

Finally, we need to invert the whitening transformation to recover $\pi$ and
$\beta_h$ from $\lW$ and $\mW_h$. Let us complete the proof by
studying how this inversion relates the error in $\pi$ and $\beta$ to
the error in $\lW$ and $\mW$.

First, we will bound the error in the $\beta$s,
\begin{align*}
  \|\hat \beta_h - \beta_h\|_2
  &= \| \Whinv \mhW - \Winv \mW \|_2 \\
  &\le \aerr{\Winv} \|\mhW_h\|_2 + \|\Winv\|_2 \|\mhW_h - \mW_h \|_2. \comment{Triangle inequality}
\end{align*}

Once more, we can apply the results of \reflem{white}, 
\begin{align*}
  \|\Whinv\|_\op 
    &\le \sqrt{\sigma_1(M_2)} \sqrt{1 + \frac{\aerr{M_2}}{\sigma_k(M_2)} } \\
    \aerr{\Winv} 
    &\le 2 \sqrt{\sigma_1(M_2)} \left(\sqrt{1 + \frac{\aerr{M_2}}{\sigma_1(M_2)}}\right) \frac{\frac{\aerr{M_2}}{\sigma_k(M_2)}}{1 - \frac{\aerr{M_2}}{\sigma_k(M_2)}}.
\end{align*}
Using Condition 1, this simplifies to, 
\begin{align*}
  \|\Whinv\|_\op &\le \sqrt{3/2} \|M_2\|_\op^{1/2} \\
  \aerr{\Winv} &\le 4\sqrt{3/2} \|M_2\|_\op^{1/2} \sigma_k(M_2)^{-1} \aerr{M_2}.
\end{align*}

Thus,
\begin{align*}
  \|\hat \beta_h - \beta_h\|_2
  &\le 4\sqrt{3/2} \|M_2\|_\op^{1/2} \sigma_k(M_2)^{-1} \aerr{M_2} 
    + 8 \|M_2\|_\op^{1/2} \frac{k \aerr{T}}{(\lW)_{min}^2} \\
    &\le 4\sqrt{3/2} \|M_2\|_\op^{1/2} \sigma_k(M_2)^{-1} \aerr{M_2} \\
  &\quad + 8 \|M_2\|_\op^{1/2} k \pi_{\max} 
    \sigma_k(M_2)^{-3/2}
      \left(24 \frac{\| {M_3} \|_\op}{\sigma_k(M_2)} + 2\sqrt{2} \right)
      \max\{\aerr{M_2}, \aerr{M_3}\}.
\end{align*}

Next, let us bound the error in $\pi$,
\begin{align*}
  |\hat \pi_h - \pi_h |
  &= \left| \frac{1}{(\lW)_h^2} - \frac{1}{(\lhW)_h^2} \right| \\
  &= \left| \frac{\left( (\lW)_h + (\lhW)_h \right) \left( (\lW)_h - (\lhW)_h \right)}
  {(\lW)_h^2(\lhW)_h^2} \right| \\
  &\le \frac{( 2(\lW)_h - \|\lW - \lhW\|_{\infty} )}{(\lW)_h^2 \left( (\lW)_h + \|\lW - \lhW\|_{\infty} \right)^2} \|\lW - \lhW\|_{\infty}.
\end{align*}
To simplify the above expression, we would like that $\|\lW - \lhW
\|_{\infty} \le (\lW)_{\min}/2$ or $\|\lW - \lhW \|_{\infty} \le
\frac{1}{2\sqrt{\pi_{\max}}}$, recalling that $(\lW)_h = \pi_h^{-1/2}$. Thus,
we would like to require the following condition to hold on $\epsilon$;  
\begin{condition}
  $\epsilon \le \frac{1}{2\sqrt{\pi_{\max}}}$.
\end{condition}

Now,
\begin{align*}
  |\hat \pi_h - \pi_h |
  &\le \frac{(3/2)(\lW)_h}{(\lW)_h^4}
  \|\lW - \lhW\|_{\infty} \\
  &\le \frac{3}{2(\lW)_h^3} \frac{5 k \aerr{T}}{(\lW)_{\min}^2} \\
  &\le \frac{3 \pi_{\max}^{3/2}}{2} 5 k \pi_{\max} 
    \sigma_k(M_2)^{-3/2}
    \left(24 \frac{\| {M_3} \|_\op}{\sigma_k(M_2)} + 2\sqrt{2} \right) \max\{\aerr{M_2}, \aerr{M_3}\} \\
    &\le \frac{15}{2} \pi_{\max}^{5/2} k 
    \sigma_k(M_2)^{-3/2}
    \left(24 \frac{\| {M_3} \|_\op}{\sigma_k(M_2)} + 2\sqrt{2} \right) \max\{\aerr{M_2}, \aerr{M_3}\}.
\end{align*}

Finally, we complete the proof by requiring that the bounds $\aerr{M_2}$ and
$\aerr{M_3}$ imply that $\|\hat \pi - \pi \|_{\infty} \le \epsilon$ and
$\|\hat \beta_h - \beta_h\|_2 \le \epsilon$, i.e.
\begin{align*}
  \max\{\aerr{M_2}, \aerr{M_3}\} &\le
  \left( \frac{15}{2} \pi_{\max}^{5/2} k 
  \sigma_k(M_2)^{-3/2}
  \left(24 \frac{\| {M_3} \|_\op}{\sigma_k(M_2)} + 2\sqrt{2} \right) \right)^{-1} \epsilon \\
  \max\{\aerr{M_2}, \aerr{M_3}\} &\le
  \left( 4\sqrt{3/2} \|M_2\|_\op^{1/2} \sigma_k(M_2)^{-1} \aerr{M_2}
    + 8 \|M_2\|_\op^{1/2} k \pi_{\max} 
      \sigma_k(M_2)^{-3/2}
        \left(24 \frac{\| {M_3} \|_\op}{\sigma_k(M_2)} + 2\sqrt{2} \right)
        \right)^{-1} \epsilon.
\end{align*}

\end{proof}

\section{Basic Lemmas}

In this section, we have included some standard results that we employ for completeness.

\begin{lemma}[Concentration of vector norms]
  \label{lem:conc-norms}
  Let $X, X_1, \cdots, X_n \in \Re^d$ be i.i.d.\ samples
  from some distribution with bounded support
  ($\|X\|_2 \le M$ with probability 1).
  Then with probability at least $1 - \delta$,
  \begin{align*}
    \left\| \frac{1}{n} \sum_{i=1}^{n} X_i - \E[X] \right\|_2 &
    \le \frac{2M}{\sqrt{n}} \left(1 + \sqrt{\frac{\log(1/\delta)}{2}}\right).
  \end{align*}
\end{lemma}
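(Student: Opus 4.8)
The plan is to combine a bound on the expected deviation with a concentration inequality for the fluctuation about that expectation. Writing $f(X_1,\dots,X_n) = \left\| \frac{1}{n}\sum_{i=1}^n X_i - \E[X] \right\|_2$, I would first control $\E[f]$ and then show that $f$ concentrates tightly around $\E[f]$ via McDiarmid's bounded-differences inequality. The two pieces added together give the stated high-probability bound.

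For the expectation, I would use Jensen's inequality to pass to the second moment, $\E[f] \le \sqrt{\E[f^2]}$. Expanding $\E[f^2] = \frac{1}{n^2}\,\E\left\| \sum_{i} (X_i - \E[X]) \right\|_2^2$ and using independence to kill the cross terms, this collapses to $\frac{1}{n}\,\E\|X - \E[X]\|_2^2 = \frac{1}{n}\left(\E\|X\|_2^2 - \|\E[X]\|_2^2\right) \le M^2/n$, where the last step uses $\|X\|_2 \le M$ almost surely. Hence $\E[f] \le M/\sqrt{n}$, which is comfortably within the claimed leading constant of $2M/\sqrt{n}$.

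For the fluctuation, I would verify the bounded-differences property. Replacing a single $X_i$ by an independent copy $X_i'$ changes the empirical mean by $\frac{1}{n}(X_i' - X_i)$, so by the reverse triangle inequality $f$ changes by at most $\frac{1}{n}\|X_i' - X_i\|_2 \le 2M/n$. With all $n$ difference constants equal to $c = 2M/n$, McDiarmid gives $\BP(f \ge \E[f] + t) \le \exp\!\left(-2t^2/(n c^2)\right) = \exp\!\left(-n t^2/(2M^2)\right)$. Setting the right-hand side to $\delta$ yields $t = M\sqrt{2\log(1/\delta)/n}$. Combining with the expectation bound, with probability at least $1-\delta$ we get $f \le \E[f] + t \le \frac{M}{\sqrt{n}}\left(1 + \sqrt{2\log(1/\delta)}\right)$, and since $\frac{M}{\sqrt n}\sqrt{2\log(1/\delta)} = \frac{2M}{\sqrt n}\sqrt{\log(1/\delta)/2}$ and $\frac{M}{\sqrt n} \le \frac{2M}{\sqrt n}$, this is dominated by the stated bound $\frac{2M}{\sqrt{n}}\left(1 + \sqrt{\log(1/\delta)/2}\right)$.

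There is no serious obstacle here—this is a textbook bounded-support concentration result. The only two points requiring care are conceptual rather than technical: first, that this is genuine vector-valued concentration, so one should apply McDiarmid to the scalar functional $f$ itself rather than reasoning coordinate-by-coordinate and then union-bounding (which would incur an extra $\sqrt{d}$ or $\log d$ factor that the statement does not have); and second, that the expectation must be bounded as a separate step, since McDiarmid controls only deviations of $f$ from its own mean, not the size of that mean. Getting the constant to match exactly the stated form is a matter of choosing how loosely to round $\E[f]$ and is not essential to the argument.
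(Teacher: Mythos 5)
Your proposal is correct and follows essentially the same route as the paper's own proof: McDiarmid's bounded-differences inequality applied to the scalar functional $f$ (with difference constants $2M/n$), combined with Jensen's inequality and independence of the cross terms to bound $\E[f]$. The only immaterial difference is that you bound $\E[f^2]$ by $M^2/n$ via the variance identity $\E\|X - \E[X]\|_2^2 \le \E\|X\|_2^2 \le M^2$, whereas the paper uses the cruder bound $\|X_i - \E[X]\|_2 \le 2M$ to get $4M^2/n$; your estimate is slightly tighter, and you correctly observe that it is dominated by the stated bound.
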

\begin{proof}
  Define $Z_i = X_i - \E[X]$.


The quantity we want to bound can be expressed as follows:
  \begin{align*}
  f(Z_1, Z_2, \cdots, Z_n) = \left\| \frac1n \sum_{i=1}^n Z_i \right\|_2.
  \end{align*}

Let us check that $f$ satisfies the bounded differences inequality:
  \begin{align*}
|f(Z_1, \cdots, Z_i, \cdots, Z_n) - f(Z_1, \cdots, Z_i', \cdots, Z_n)|
& \le \frac1n \|Z_i - Z_i'\|_2 \\
& = \frac1n \|X_i - X_i'\|_2 \\
&\le \frac{2M}{n},
  \end{align*}
  by the bounded assumption of $X_i$ and the triangle inequality.

By McDiarmid's inequality,
with probability at least $1 - \delta$,
we have:
\begin{align*}
\Pr[f - \E[f] \ge \epsilon] \le
\exp\left(\frac{-2 \epsilon^2}{\sum_{i=1}^n (2M/n)^2}\right).
\end{align*}
Re-arranging:
\begin{align*}
  \left\|\frac{1}{n}\sum_{i=1}^n Z_i\right\|_2
  &\le \E\left[ \left\| \frac{1}{n} \sum_{i=1}^n Z_i \right\|_2 \right]
  + M\sqrt{\frac{2\log(1/\delta)}{n}}.
\end{align*}

Now it remains to bound $\E[f]$.
By Jensen's inequality, $\E[f] \le \sqrt{\E[f^2]}$,
so it suffices to bound $\E[f^2]$:
\begin{align*}
  \E\left[ \frac1{n^2} \left\| \sum_{i=1}^n Z_i \right\|^2 \right]
  &= \E\left[ \frac1{n^2} \sum_{i=1}^n \|Z_i\|_2^2 \right] +
\E\left[ \frac1{n^2} \sum_{i\neq j} \innerp{Z_i}{Z_j} \right] \\ 
& \le \frac{4M^2}{n} + 0,
\end{align*}
where the cross terms are zero by independence of the $Z_i$'s.

Putting everything together, we obtain the desired bound:
\begin{align*}
\left\|\frac{1}{n}\sum_{i=1}^n Z_i \right\|
&\le \frac{2M}{\sqrt{n}} + M \sqrt{\frac{2\log(1/\delta)}{n}}.
\end{align*}
\end{proof}


\textbf{Remark}: The above result can be directly
applied to the Frobenius norm of a matrix
$M$ because $\|M\|_F = \|\vvec(M)\|_2$.

\begin{lemma}[Perturbation Bounds on Whitening Matrices]
  \label{lem:white}
  Let $A$ be a rank-k $d\times d$ matrix, $\Wp$ be a $d \times k$ matrix that
  whitens $\hat A$, i.e. $\Wp^T \Ap \Wp = I$.  Suppose $\Wp^T A \Wp
  = U D U^T$, then define $W = \hat{W} U D^{-\half} U^T$. Note that $W$
  is also a $d \times k$ matrix that whitens $A$. If $\serr{A}
  = \frac{\aerr{A}}{\sigma_k(A)} < \frac{1}{3}$ 
  then, 
  \begin{align*}
    \|\hat W\|_\op 
      &\le \frac{\|W\|_\op}{\sqrt{1 - \serr{A}} } \\
  \|\Whinv\|_\op 
    &\le \|\Winv\|_\op \sqrt{1 + \serr{A}} \\
    \aerr{W} 
      &\le \|W\|_\op \frac{\serr{A}}{1 - \serr{A}} \\
    \aerr{\Winv} 
      &\le \|\Winv\|_\op \sqrt{1 + \serr{A}} \frac{\serr{A}}{1 - \serr{A}}.
  \end{align*}
\end{lemma}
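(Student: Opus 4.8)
The plan is to collapse all four inequalities to spectral bounds on a single $k\times k$ matrix. Introduce $B \eqdef \Wp^\top A \Wp$, whose eigendecomposition is exactly the $U D U^\top$ in the statement, so that $U D^{-1/2} U^\top = B^{-1/2}$ and the constructed whitener is $W = \Wp B^{-1/2}$, equivalently $\Wp = W B^{1/2}$. Here $A$ is rank-$k$ and positive semidefinite (as is $A = M_2$ in our application), so $B \succ 0$ and $B^{\pm 1/2}$ are well defined. Since everything ($\Wp$, $\Whinv$, their errors) can be written through $B^{1/2}$ and $W$, the heart of the argument is to control the spectrum of $B$ without reintroducing $\|\Wp\|_\op$.

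The key identity I would establish is $W^\top \hat A W = B^{-1}$. It drops out of $\Wp^\top \hat A \Wp = I$ (the defining property of $\Wp$) together with $W = \Wp B^{-1/2}$, since $W^\top \hat A W = B^{-1/2}(\Wp^\top \hat A \Wp)B^{-1/2} = B^{-1}$. On the other hand $W$ whitens $A$, so $W^\top A W = I$ and therefore $W^\top \hat A W - I = W^\top(\hat A - A)W$, whence $\|W^\top \hat A W - I\|_\op \le \|W\|_\op^2\, \aerr{A}$. The norm $\|W\|_\op$ is a clean, non-circular quantity: because $W$ whitens the rank-$k$ PSD matrix $A$, its nonzero singular values are $\sigma_i(A)^{-1/2}$, so $\|W\|_\op = \sigma_k(A)^{-1/2}$ and the bound becomes $\aerr{A}/\sigma_k(A) = \serr{A}$. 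Combined with the identity, $B^{-1}$ has eigenvalues in $[1-\serr{A},\,1+\serr{A}]$, hence $B$ has eigenvalues in $[(1+\serr{A})^{-1},\,(1-\serr{A})^{-1}]$; in particular $\sigmamax(B) \le (1-\serr{A})^{-1}$ and $\sigmamin(B) \ge (1+\serr{A})^{-1}$, with the hypothesis $\serr{A}<\tfrac13$ keeping all these finite and $B$ positive definite.

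With the spectrum of $B$ in hand, each claim is a one-line manipulation. From $\Wp = W B^{1/2}$ we get $\|\Wp\|_\op \le \|W\|_\op \sigmamax(B)^{1/2} \le \|W\|_\op(1-\serr{A})^{-1/2}$. Since $B^{1/2}$ is square and invertible and $W$ has full column rank, $\Whinv = (W B^{1/2})^\dagger = B^{-1/2}\Winv$, giving $\|\Whinv\|_\op \le \sigmamin(B)^{-1/2}\|\Winv\|_\op \le \sqrt{1+\serr{A}}\,\|\Winv\|_\op$. For the error terms I would write $\Wp - W = W(B^{1/2}-I)$ and $\Whinv - \Winv = (B^{-1/2}-I)\Winv$, so everything reduces to bounding $\|B^{1/2}-I\|_\op$ and $\|B^{-1/2}-I\|_\op$. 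These follow from the elementary scalar inequalities $(1-t)^{-1/2}-1 \le t/(1-t)$ and $1-(1+t)^{-1/2} \le t/(1-t)$ (for $t=\serr{A}\in(0,\tfrac13)$), applied to the eigenvalues of $B^{1/2}$, yielding $\|B^{1/2}-I\|_\op \le \serr{A}/(1-\serr{A})$ and hence $\aerr{W} \le \|W\|_\op\,\serr{A}/(1-\serr{A})$; the last bound comes from $B^{-1/2}-I = -B^{-1/2}(B^{1/2}-I)$, so $\|B^{-1/2}-I\|_\op \le \sqrt{1+\serr{A}}\,\serr{A}/(1-\serr{A})$.

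The main obstacle is conceptual rather than computational: spotting the identity $W^\top \hat A W = B^{-1}$, which is precisely what transfers a perturbation bound phrased in the clean quantity $\|W\|_\op = \sigma_k(A)^{-1/2}$ into spectral control of $B$, and thus of $\Wp$. Attacking $B = \Wp^\top A \Wp$ directly instead reintroduces $\|\Wp\|_\op$ — the very quantity to be bounded — and is circular. The only remaining points needing care are the justification that $\|W\|_\op = \sigma_k(A)^{-1/2}$ (from $A$ being rank-$k$ PSD and $W$ whitening it) and the pseudoinverse identity $(W B^{1/2})^\dagger = B^{-1/2}\Winv$ (valid because $B^{1/2}$ is invertible and $W$ has full column rank); both are standard.
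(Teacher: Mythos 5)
Your proof is correct, and it shares the paper's skeleton: both work with the $k\times k$ matrix $B = \Wp^\top A\Wp = UDU^\top$, use the factorizations $\Wp = W B^{1/2}$ and $\Whinv = B^{-1/2}\Winv$ (the paper writes these as $\Wp = W\,UD^{1/2}U^\top$ and $\Winv = UD^{1/2}U^\top\Whinv$), and finish with scalar square-root inequalities applied to eigenvalues --- your $\|B^{1/2}-I\|_\op$ is exactly the paper's $\|I - UD^{1/2}U^\top\|_\op$. The genuine difference is how the spectrum of $B$ gets controlled. The paper conjugates the perturbation by $\Wp$: from $\Wp^\top\Ap\Wp = I$ it writes $\|I-D\|_\op = \|\Wp^\top(\Ap-A)\Wp\|_\op \le \|\Wp\|_\op^2\,\aerr{A}$, which requires bounding $\|\Wp\|_\op$ \emph{first}; it does so through the identity $\|\Wp\|_\op = \sigma_k(\Ap)^{-1/2}$ plus Weyl's theorem ($\sigma_k(\Ap) \ge \sigma_k(A)-\aerr{A}$), and it handles $\|\Whinv\|_\op$ analogously through $\sigma_1(\Ap)$. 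So the route you dismiss as ``circular'' is in fact the paper's route; Weyl's inequality is what breaks the circularity there, at the cost of invoking the explicit singular-value formulas for $\Wp$ relative to $\Ap$. You conjugate by $W$ instead, via the identity $W^\top\Ap W = B^{-1}$, so your perturbation bound uses $\|W\|_\op^2 = 1/\sigma_k(A)$; then all four inequalities fall out of the spectrum of $B$ uniformly, Weyl never appears, the norm bounds on $\Wp$ and $\Whinv$ come from the same mechanism rather than from separate arguments, and $\Wp$ enters only through its defining property $\Wp^\top\Ap\Wp = I$ rather than through its SVD structure. The one caveat is shared rather than yours alone: the identification $\|W\|_\op = \sigma_k(A)^{-1/2}$ (you need the direction ``$\le$'') holds only when the column space of the whitener lies in the range of the matrix it whitens, since whiteners are not unique; the paper asserts the analogous formulas ($W = V\Sigma^{-1/2}V^\top$, hence $\|\Winv\|_\op = \sigma_1(A)^{1/2}$) without comment at the start of its proof, so your argument, which at least flags this point explicitly, is no less rigorous than the one it replaces.
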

\begin{proof}
  This lemma has also been proved in \citet[Lemma 10]{hsu13spherical},
  but we present it differently here for completeness.
  First, note that for a matrix $W$ that whitens $A = V \Sigma V^T$,
  $W = V \Sigma^{-\half} V^T$ and $\Winv = V \Sigma^{\half} V^T$. Thus, by rotational invariance,
  \begin{align*}
    \|W\|_\op &= \frac{1}{\sqrt{\sigma_k(A)}} \\
    \|\Winv\|_\op &= \sqrt{\sigma_1(A)} \\
  \end{align*}
  This allows us to bound the operator norms of $\hat W$ and $\Whinv$ in
  terms of $W$ and $\Winv$,
  \begin{align*}
    \|\hat W\|_\op &= \frac{1}{\sqrt{\sigma_k(\hat A)}} \\
    &\le \frac{1}{\sqrt{\sigma_k({A}) - \aerr{A}} } \comment{By Weyl's Theorem} \\
    &\le \frac{1}{1 - \serr{A}} \frac{1}{\sqrt{\sigma_k(A)}} \\
    &= \frac{\|W\|_\op}{\sqrt{1 - \serr{A}} } \\
    \|\Whinv\|_\op &= \sqrt{\sigma_1(\hat A)} \\
    &\le \sqrt{\sigma_1({A}) + \aerr{A}}  \comment{By Weyl's Theorem} \\
    &\le \sqrt{1 + \serr{A}} \sqrt{\sigma_1(A)} \\
    &= \sqrt{1 + \serr{A}} \|\Winv\|_\op.
  \end{align*}

  To find $\aerr{W}$, we will exploit the rotational invariance of the operator norm. 
  \begin{align*}
    \aerr{W} &= \| \Wp - W \|_\op \\
    &= \| W U D^{\half} U^T - W \|_\op  \comment{$\hat W = W U D^{-\half} U^T$}\\
    &\le \|W\|_\op \| I - U D^{\half} U^T \|_\op \comment{Sub-multiplicativity}.
  \end{align*}

  We will now bound $\| I - U D^{\half} U^T \|_\op$. Note that by
  rotational invariance, $\| I - U D^{\half} U^T \|_\op = \|
  I - D^{\half} \|_\op$. By Weyl's inequality, $|1 - \sqrt{D_{ii}}| \le
  \| I - D^{\half} \|_\op$; put differently, $\| I - D^{\half} \|_\op$
  bounds the amount $\sqrt{D_{ii}}$ can diverge from $1$. Using the
  property that $|(1+x)^{-1/2} - 1| \le |x|$ for all $|x| \le 1/2$, we
  will take an alternate approach and bound $D_{ii}$ separately, and use
  it to bound $\| I - D^{\half} \|_\op$.
  \begin{align*}
    \| I - D \|_\op 
    &= \| I - U D U^T \|_\op \comment{Rotational invariance} \\
    &= \| \hat W^T \hat A \hat W - \hat W^T A \hat W \|_\op \comment{By definition} \\
    &= \| \hat W^T (\hat A - A) \hat W \|_\op \\
    &\le \|\hat W\|_\op^2 \aerr{A} \\
    &\le \frac{1}{\sigma_k(A)} \frac{\aerr{A}}{1 - \serr{A}} \\
    &\le \frac{\serr{A}}{1 - \serr{A}}.
  \end{align*}

  Therefore, if $\frac{\serr{A}}{1 - \aerr{A}} < 1/2$, or $\aerr{A} < \frac{\sigma_k(A)}{3}$,
  \begin{align*}
    \| I - U D^{1/2} U^T \|_\op &\le \frac{\serr{A}}{1 - \serr{A}}.
  \end{align*}

  We can now complete the proof of the bound on $\aerr{W}$, 
  \begin{align*}
    \aerr{W} 
    &= \|W\|_\op \| I - U D^{1/2} U^T \|_\op \comment{Rotational invariance}\\
    &\le \|W\|_\op \frac{\serr{A}}{1 - \serr{A}}.
  \end{align*}

  Similarly, we can bound the error on the un-whitening transform, $\Winv$,
  \begin{align*}
    \aerr{\pinv{W}} &= \| \pinv{\Wp} - \pinv{W} \|_\op \\
    &= \| \pinv{\Wp} - U D^{\half} U^T \pinv{\Wp} \|_\op \comment{$\pinv{W} = U D^{1/2} U^T \pinv{\Wp}$} \\
    &\le \|\pinv{\Wp}\|_\op \| I - U D^{\half} U^T \|_\op \\
    &\le \|\pinv{W}\|_\op \sqrt{1+\serr{A}} \frac{\serr{A}}{1 - \serr{A}}.
  \end{align*}
\end{proof}

}{}

\end{document}